\newtheorem{theorem}{Theorem}
\newtheorem{lemma}{Lemma}
\begin{document}
%
\title{Automatic Subspace Learning via Principal Coefficients Embedding}
%
%
%

\author{Xi~Peng,
        Jiwen~Lu,~\IEEEmembership{Senior Member,~IEEE,}
        Zhang~Yi,~\IEEEmembership{Fellow,~IEEE}
        and Rui~Yan,~\IEEEmembership{Member,~IEEE,}                
\thanks{Manuscript received ***; revised ***; accepted ***. This work was supported by A*STAR Industrial Robotics Program - Distributed Sensing and Perception under SERC grant 1225100002 and the National Natural Science Foundation of China under Grant 61432012.}
\thanks{Xi~Peng is with Institute for Infocomm Research, Agency for Science, Technology and Research (A*STAR), Singapore 138632. E-mail: pangsaai@gmail.com.}
\thanks{Jiwen~Lu is with the Department of Automation, Tsinghua University, Beijing 100084, China. E-mail: lujiwen@mail.tsinghua.edu.cn.}
\thanks{Zhang~Yi and Rui~Yan are with College of Computer Science, Sichuan University, Chengdu, 610065, China. E-mail: zhangyi@scu.edu.cn; yanrui2006@gmail.com.}
\thanks{Corresponding author: Rui Yan.}}

%
%

\markboth{IEEE Transactions on Cybernetics}{}
%



\maketitle

\begin{abstract}
In this paper, we address two challenging problems in unsupervised subspace learning: 1) how to automatically identify the feature dimension of the learned subspace (i.e., automatic subspace learning), and 2) how to learn the underlying subspace in the presence of Gaussian noise (i.e., robust subspace learning). We show that these two problems  can be simultaneously solved by proposing a new method (called principal coefficients embedding, PCE). For a given data set $\mathbf{D}\in \mathds{R}^{m\times n}$, PCE recovers a clean data set $\mathbf{D}_{0}\in \mathds{R}^{m\times n}$ from $\mathbf{D}$ and simultaneously learns a global reconstruction relation $\mathbf{C}\in \mathbf{R}^{n\times n}$ of $\mathbf{D}_{0}$. By preserving $\mathbf{C}$ into an $m^{\prime}$-dimensional space, the proposed method obtains a projection matrix that can capture the latent manifold structure of $\mathbf{D}_{0}$, where $m^{\prime}\ll m$ is automatically determined by the rank of $\mathbf{C}$ with theoretical guarantees. PCE has three advantages: 1) it can automatically determine the feature dimension even though data are sampled from a union of multiple linear subspaces in presence of the Gaussian noise; 2) Although the objective function of PCE only considers the Gaussian noise, experimental results show that it is robust to the non-Gaussian noise  (\textit{e.g.}, random pixel corruption) and real disguises; 3) Our method has a closed-form solution and can be calculated very fast. Extensive experimental results show the superiority of PCE on a range of databases with respect to the classification accuracy, robustness and efficiency. 
\end{abstract}

\begin{IEEEkeywords}
 Automatic dimension reduction, graph embedding, corrupted data, robustness, manifold learning.
\end{IEEEkeywords}

%
\IEEEpeerreviewmaketitle

\section{Introduction}
\label{sec1}

Subspace learning or metric learning aims to find a projection matrix $\mathbf{\Theta}\in \mathds{R}^{m \times m^{\prime}}$ from the training data $\mathbf{D}^{m\times n}$, so that the high-dimensional datum $\mathbf{y}\in\mathds{R}^{m}$ can be  transformed into a low-dimensional space via $\mathbf{z}=\mathbf{\Theta}^{T}\mathbf{y}$. Existing subspace learning methods can be roughly divided into three categories: supervised, semi-supervised, and unsupervised. Supervised method incorporates the class label information of $\mathbf{D}$ to obtain discriminative features. The well-known works include but not limit to linear discriminant analysis~\cite{Fisher1936}, neighbourhood components analysis~\cite{Jacob2004}, and their variants such as~\cite{Chen2005,Lu2014,Wang2014:Fisher,Yuan2016:Cong}. Moreover, Xu et\ al.~\cite{Xu2014:PAMI} recently propose to formulate the problem of supervised multiple view subspace learning as one multiple source communication system, which provide a novel insight to the community. Semi-supervised methods~\cite{Cai2007,Yan2009semi,Xu2014:Large} utilize limited labeled training data as well as unlabeled ones for better performance. Unsupervised methods seek a low-dimensional subspace without using any label information of training samples. Typical methods in this category include Eigenfaces~\cite{Turk1991}, neighbourhood preserving embedding (NPE)~\cite{He2005}, locality preserving projections (LPP)~\cite{He2005Lap}, sparsity preserving projections (SPP)~\cite{Qiao2010} or known as L1-graph~\cite{Cheng2010}, and multi-view intact space learning~\cite{Xu2015:multi}. For these subspace learning methods, Yan~\emph{et\ al.}~\cite{Yan2007} have shown that most of them can be unified into the framework of graph embedding, \emph{i.e.}, low dimensional features can be achieved by embedding some desirable properties (described by a similarity graph) from a high-dimensional space into a low-dimensional one. By following this framework, this paper focuses on unsupervised subspace learning, \emph{i.e.}, dimension reduction considering the unavailable label information in training data.

Although a large number of subspace learning methods have been proposed, less works have investigated the following two challenging problems simultaneously: 1) how to automatically determine the dimension of the feature space, referred to as \textit{automatic subspace learning}, and 2) how to immune the influence of corruptions, referred to as \textit{robust subspace learning}. 

Automatic subspace learning involves the technique of dimension estimation which aims at identifying the number of features necessary for the learned low-dimensional subspace to describe a data set. In previous studies, most existing methods manually set the feature dimension by exploring all possible values based on the classification accuracy. Clearly, such a strategy is time-consuming and easily overfits to the specific data set. In the literature of manifold learning, some dimension estimation methods have been proposed, \emph{e.g.}, spectrum analysis based methods~\cite{Polito2001,Yan2007A}, box-counting based methods~\cite{Kegl2002}, fractal-based methods~\cite{Camastra2002,Mo2012}, tensor voting~\cite{Mordohai2010}, and neighbourhood smoothing \cite{Carter2010}. Although these methods have achieved some impressive results, this problem is still far from solved due to the following limitations: 1) these existing methods may work only when data are sampled in a uniform way and data are free to corruptions, as pointed out by Saul and Roweis~\cite{Saul2003}; 2) most of them can accurately estimate the intrinsic dimension of a single subspace but would fail to work well for the scenarios of multiple subspaces, especially, in the case of the dependent or disjoint subspaces; 3) although some dimension estimation techniques can be incorporated with  subspace learning algorithms, it is more desirable to design a method that can not only  automatically identify the feature dimension but also reduce the dimension of data.

Robust subspace learning aims at identifying underlying subspaces even though the training data $\mathbf{D}$ contains gross corruptions such as the Gaussian noise. Since $\mathbf{D}$ is corrupted by itself, accurate prior knowledge about the desired geometric properties is hard to be learned from $\mathbf{D}$. Furthermore, gross corruptions will make dimension estimation more difficult. This so-called robustness learning problem has been challenging in machine learning and computer vision. One of the most popular solutions is recovering a clean data set from inputs and then performing dimension reduction over the clean data. Typical methods include the well-known principal component analysis (PCA) which achieves robust results by removing the bottom eigenvectors corresponding to the smallest eigenvalues. However, PCA can achieve a good result only when data are sampled from a single subspace and only contaminated by a small amount of noises. Moreover, PCA needs specifying a parameter (\emph{e.g.}, 98\% energy) to distinct the principal components from the minor ones. To improve the robustness of PCA, Candes~\emph{et\ al.} recently proposed robust PCA (RPCA)~\cite{Candes2011} which can handle the sparse corruption and has achieved a lot of success~\cite{He2011:Robust,Peng2012RASL,Zhao2014robust,Nie2014,Liu2015:Large,Hsu2011:RMD}. However, RPCA directly removes the errors from the input space, which cannot obtain the low-dimensional features of inputs. Moreover, the computational complexity of RPCA is too high to handle the large-scale data set with very high dimensionality. Bao~\emph{et\ al.}~\cite{Bao2013} proposed an algorithm which can handle the gross corruption. However, they did not explore the possibility to automatically determine feature dimension. Tzimiropoulos~\emph{et\ al.}~\cite{Tzimiropoulos2012} proposed a subspace learning method from image gradient orientations by replacing pixel intensities of images with gradient orientations. Shu et\ al.~\cite{Shu2014:Robust} proposed to impose the low-rank constraint and group sparsity on the reconstruction coefficients under orthonormal subspace so that the Laplacian noise can be identified. Their method outperforms a lot of popular methods such as Gabor features in illumination- and occlusion-robust face recognition. Liu and Tao have recently carried out a series of comprehensive works to discuss how to handle various noises, \textit{e.g.}, Cauchy noise~\cite{Liu2014:Onrobustness}, Laplacian noise~\cite{Liu2015:On}, and noisy labels~\cite{Liu2016:Class}. Their works provide some novel theoretical explanations towards understanding the role of these errors. Moreover, some recent developments have been achieved in the field of subspace clustering~\cite{Favaro2011,Vidal2014,Peng2015robust,Jones2014,Xiao2014,Yu2015:GTD,Peng2016:Con}, which use $\ell_1$-, $\ell_2$-, or nuclear-norm based representation to achieve robustness.

In this paper, we proposed a robust unsupervised subspace learning method which can automatically identify the number of features. The proposed method, referred to as principal coefficients embedding (PCE), formulates the possible corruptions as a term of an objective function so that a clean data set $\mathbf{D}_{0}$ and the corresponding reconstruction coefficients $\mathbf{C}$ can be simultaneously learned from the training data $\mathbf{D}$. By embedding $\mathbf{C}$ into an $m^{\prime}$-dimensional space, PCE obtains a projection matrix $\mathbf{\Theta}^{m\times m^{\prime}}$, where $m^{\prime}$ is adaptively determined by the rank of $\mathbf{C}$ with theoretical guarantees. 

PCE is motivated by a recent work in subspace clustering~\cite{Favaro2011,Peng2015Connections} and the well-known locally linear embedding (LLE) method~\cite{Roweis2000}. The former motivates us the way to achieve robustness, \textit{i.e.}, the errors such as the Gaussian noise can be mathematically formulated as a term into the objective function and thus the errors can be explicitly removed. The major differences between \cite{Favaro2011} and PCE are: 1) \cite{Favaro2011} is proposed for clustering, whereas PCE is for dimension reduction; 2) the objective functions are different. PCE is based on the Frobenius norm instead of the nuclear norm, thus resulting a closed-form solution and avoiding iterative optimization procedure; 3) PCE can automatically determine the feature dimension, whereas \cite{Favaro2011} does not investigate this challenging problem. LLE motivated us the way to estimate feature dimension even though it does not overcome this problem. LLE is one of most popular  dimension reduction methods, which encodes each data point as a linear combination of its neighborhood and preserves such reconstruction relationship into different projection spaces. LLE implies the possibility to estimate the feature dimension using the size of neighborhood of data points. However, this parameter needs to be specified by users rather than automatically learning from data. Thus, LLE still suffers from the issue of dimension estimation. Moreover, the performance of LLE would be degraded when the data is contaminated by noises. The contributions of this work are summarized as follows: 

\begin{itemize}
  \item The proposed method (\emph{i.e.}, PCE) can handle the Gaussian noise that probably exists into data with theoretical guarantees. Different from the existing dimension reduction methods such as L1-Graph, PCE formulates the corruption into its objective function and only calculates the reconstruction coefficients using a clean data set instead of the original one. Such a formulation can perform data recovery and improve the robustness of PCE;
  \item Unlike previous subspace learning methods, PCE can automatically determine the feature dimension of the learned low-dimensional subspace. This largely reduces the efforts for finding an optimal dimension and thus PCE is more competitive in practice;
  \item PCE is computationally efficient, which only involves performing singular value decomposition (SVD) over the training data set one time. 
\end{itemize}

The rest of this paper is organized as follows. Section~\ref{sec2} briefly introduces some related works. Section~\ref{sec3} presents our proposed algorithm. Section~\ref{sec4} reports the experimental results and Section~\ref{sec5} concludes this work.

\begin{table}[!t]
\caption{Some used notations.}
\label{tab1}
\begin{center}
\begin{small}
\begin{tabular}{ll}
\toprule
Notation & Definition\\
\midrule
$n$ & the number of data points\\
$n_{i}$ & data size of the $i$-th subject\\
$m$ & the dimension of input\\
$m^{\prime}$ & the dimension of feature space \\
$s$ & the number of subject\\
$r$ & the rank of a given matrix\\
$\mathbf{y}\in\mathds{R}^{m}$ &  a given testing sample\\
$\mathbf{z}\in\mathds{R}^{m^{\prime}}$ & the low-dimensional feature\\
$\mathbf{D}=[\mathbf{d}_1, \mathbf{d}_2, \ldots, \mathbf{d}_{n}]$ &  training data set\\
$\mathbf{D}=\mathbf{U} \mathbf{\Sigma} \mathbf{V}^{T}=\mathbf{U}_{r} \mathbf{\Sigma}_{r} \mathbf{V}^{T}_{r}$ & full and skinny SVD of $\mathbf{D}$\\
$\mathbf{D}_{0}\in\mathds{R}^{m\times n}$ & the desired clean data set\\
$\mathbf{E}\in\mathds{R}^{m\times n}$ & the errors existing into $\mathbf{D}$\\
$\mathbf{C}=[\mathbf{c}_1, \mathbf{c}_2, \ldots, \mathbf{c}_{n}]$ & the representation of $\mathbf{D}_{0}$\\
$\sigma_{i}(\mathbf{C})$ & the $i$-th singular value of $\mathbf{C}$\\
$\mathbf{\Theta}\in \mathds{R}^{m\times m^{\prime}}$ & the  projection matrix\\
\bottomrule
\end{tabular}
\end{small}
\end{center}
\end{table}

\section{Related Works}
\label{sec2}

\subsection{Notations and Definitions}
\label{sec2.1}

In the following, \textbf{lower-case bold letters} represent column vectors and \textbf{UPPER-CASE BOLD ONES} denote matrices. $\mathbf{A}^T$ and $\mathbf{A}^{\dag}$ denote the transpose and pseudo-inverse of the matrix $\mathbf{A}$, respectively. $\mathbf{I}$ denotes the identity matrix. 

For a given data matrix $\mathbf{D} \in \mathds{R}^{m\times n}$, the Frobenius norm of $\mathbf{D}$ is defined as
\begin{equation}
	\label{eq2.1}
	\|\mathbf{D}\|_{F} = \sqrt{trace(\mathbf{D}\mathbf{D}^{T})}=\sqrt{\sum_{i=1}^{\min\{m,n\}}\sigma_{i}^{2}(\mathbf{D})},
\end{equation}
where $\sigma_{i}(\mathbf{D})$ denotes the $i$-th singular value of $\mathbf{D}$.

The full Singular Value Decomposition (SVD) and the skinny SVD of $\mathbf{D}$ are defined as $\mathbf{D}=\mathbf{U} \mathbf{\Sigma} \mathbf{V}^{T}$ and
$\mathbf{D}=\mathbf{U}_{r} \mathbf{\Sigma}_{r} \mathbf{V}_{r}^{T}$, where $\mathbf{\Sigma}$ and $\mathbf{\Sigma}_{r}$ are in descending order. $\mathbf{U}_{r}$, $\mathbf{V}_{r}$ and $\mathbf{\Sigma}_{r}$ consist of the top (\emph{i.e.}, largest) $r$ singular vectors and singular values of $\mathbf{D}$.~Table~\ref{tab1} summarizes some notations used throughout the paper.

\subsection{Locally Linear Embedding}
\label{sec2.2}

In~\cite{Yan2007}, Yan~\emph{et\ al.} have shown that most unsupervised, semi-supervised, and supervised subspace learning methods can be unified into a framework known as graph embedding. Under this framework, subspace learning methods obtain low-dimensional features by preserving some desirable geometric relationships from a high-dimensional space into a low-dimensional one. Thus, the performance of subspace learning largely depends on the identified relationship which is usually described by a similarity graph (\emph{i.e.}, affinity matrix). In the graph, each vertex corresponds to a data point and the edge weight denotes the similarity between two connected points. There are two popular ways to measure the similarity among data points,~\emph{i.e.}, pairwise distance such as Euclidean distance~\cite{He2003} and linear reconstruction coefficients introduced by LLE~\cite{Roweis2000}.

For a given data matrix $\mathbf{D}=[\mathbf{d}_{1}, \mathbf{d}_{2}, \ldots,\mathbf{d}_{n}]$, LLE solves the following problem:
\begin{equation}
	\label{eq2.2}
	\min_{\mathbf{c}_{i}}\sum_{i=1}^{n}\|\mathbf{d}_{i}-\mathbf{B}_{i}\mathbf{c}_{i}\|_{2},\hspace{3mm}\mathrm{s.t.}\hspace{1mm}\sum_{j}c_{ij}=1,
\end{equation}
where $\mathbf{c}_{i}\in \mathds{R}^{p}$ is the linear representation of $\mathbf{d}_{i}$ over $\mathbf{B}_{i}$, $c_{ij}$ denotes the $j$-th entry of $\mathbf{c}_{i}$, and $\mathbf{B}_{i}\in \mathds{R}^{m\times p}$ consists of $p$ nearest neighbors of $\mathbf{d}_{i}$ that are chosen from the collection of $[\mathbf{d}_{1}, \ldots, \mathbf{d}_{i-1}, \mathbf{d}_{i+1},\ldots,\mathbf{d}_{n}]$ in terms of Euclidean distance.

By assuming the reconstruction relationship $\mathbf{c}_{i}$ is invariant to ambient space, LLE obtains the low-dimensional features $\mathbf{Y}\in \mathds{R}^{ m^{\prime}\times n}$ of $\mathbf{D}$ by
\begin{equation}
	\label{eq2.3}
	\min_{\mathbf{Y}}\|\mathbf{Y} - \mathbf{Y}\mathbf{W}\|_{F}^{2}, \hspace{3mm} \mathrm{s.t.}\hspace{1mm}\mathbf{Y}^{T}\mathbf{Y}=\mathbf{I},
\end{equation}
where $\mathbf{W}=[\mathbf{w}_{1}, \mathbf{w}_{2},\ldots,\mathbf{w}_{n} ]$ and the nonzero entries of $\mathbf{w}_{i}\in \mathds{R}^{n}$ corresponds to $\mathbf{c}_{i}$.

However, LLE cannot handle the out-of-sample data that are not included into $\mathbf{D}$. To solve this problem,  NPE~\cite{He2003} calculates the projection matrix $\mathbf{\Theta}$ instead of $\mathbf{Y}$ by replacing $\mathbf{Y}$ with $\mathbf{\Theta}^{T}\mathbf{D}$ into (\ref{eq2.3}).

\subsection{L1-Graph}
\label{sec2.3}

By following the framework of LLE and NPE, Qiao~\emph{et\ al.}~\cite{Qiao2010} and Cheng~\emph{et\ al.}~\cite{Cheng2010} proposed SPP and L1-graph, respectively. The methods sparsely encode each data points by solving the following sparse coding problem:
\begin{equation}
	\label{eq2.4}
	\min_{\mathbf{c}_{i}}\|\mathbf{d}_{i}-\mathbf{D}_{i}\mathbf{c}_{i}\|_{2}+\lambda\|\mathbf{c}_{i}\|_{1},
\end{equation}
where $\mathbf{D}_{i}=[\mathbf{d}_{1},\ldots,\mathbf{d}_{i-1},\mathbf{0},\mathbf{d}_{i+1},\ldots,\mathbf{d}_{n}]$ and (\ref{eq2.4}) can be solved by many $\ell_1$-solvers~\cite{Yang2010,Liu2015:L1}.

After obtaining $\mathbf{C}\in \mathds{R}^{n\times n}$, SPP and L1-graph embed $\mathbf{C}$ into the feature space by following NPE. The advantage of sparsity based subspace methods is that they can automatically determine the neighbourhood for each data point without the parameter of neighbourhood size. Inspired by the success of SPP and L1-graph, a number of spectral embedding  methods~\cite{Zhang2013,Tang2015CYB,Lu2015,Tao2013,Hou2014,Chen2014CYB} have been proposed. However, these methods including L1-graph and SPP have still required specifying the dimension of feature space.

\subsection{Robust Principal Component Analysis}
\label{sec2.4}

RPCA~\cite{Candes2011} is proposed to improve the robustness of PCA, which solves the following optimization problem:
\begin{equation}
	\label{eq2.5}
	\min_{\mathbf{D}_{0},\mathbf{E}} \mathrm{rank}(\mathbf{D}_{0})+\lambda\|\mathbf{E}\|_{0}\hspace{3mm}\mathrm{s.t.}\hspace{1mm}\mathbf{D}=\mathbf{D}_{0}+\mathbf{E},
\end{equation}
where $\lambda>0$ is the parameter to balance the possible corruptions and the desired clean data, and $\|\cdot\|_{0}$ is $\ell_0$-norm to count the number of nonzero entries of a given matrix or vector. 

Since the rank operator and $\ell_0$-norm are non-convex and discontinuous, ones usually relax them with nuclear norm and $\ell_1$-norm~\cite{Recht2010}. Then, (\ref{eq2.5}) is approximated by 
\begin{equation}
	\label{eq2.6}
	\min_{\mathbf{D}_{0},\mathbf{E}} \|\mathbf{D}_{0}\|_{\ast}+\lambda\|\mathbf{E}\|_{1}\hspace{3mm}\mathrm{s.t.}\hspace{1mm}\mathbf{D}=\mathbf{D}_{0}+\mathbf{E},
\end{equation}
where $\|\mathbf{D}\|_{\ast}=trace(\sqrt{\mathbf{D}^{T}\mathbf{D}})=\sum_{i=1}^{\min\{m,n\}}\sigma_{i}(\mathbf{D})$ denotes the nuclear norm of $\mathbf{D}$ and $\sigma_{i}(\mathbf{D})$ is the $i$-th singular value of $\mathbf{D}$.

\section{Principal Coefficients Embedding for Unsupervised Subspace Learning}
\label{sec3}

In this section, we propose an unsupervised algorithm for  subspace learning, \emph{i.e.}, principal coefficients embedding (PCE). The method not only can achieve robust results but also can automatically determine the feature dimension.

For a given data set $\mathbf{D}$ containing the errors $\mathbf{E}$, PCE achieves robustness and dimension estimation in two steps: 1) the first step achieves the robustness by recovering a clean data set $\mathbf{D}_{0}$ from $\mathbf{D}$ and building a similarity graph $\mathbf{C}\in \mathds{R}^{n\times n}$ with the reconstruction coefficients of $\mathbf{D}_{0}$, where $\mathbf{D}_{0}$ and $\mathbf{C}$ are jointly learned by solving a SVD problem; 2) the second step automatically estimates the feature dimension using the rank of $\mathbf{C}$ and learns the projection matrix $\mathbf{\Theta}\in \mathds{R}^{m\times m^{\prime}}$ by embedding $\mathbf{C}$ into an $m^{\prime}$-dimensional space. In the following, we will introduce these two steps in details.

\subsection{Robustness Learning}

For a given training data matrix $\mathbf{D}$, PCE removes  the corruption $\mathbf{E}$ from $\mathbf{D}$ and then linearly encodes the recovered clean data set $\mathbf{D}_{0}$ over itself. The proposed objective function is as follows:
\begin{equation}
	\label{eq3.1}
	\min_{\mathbf{C},\mathbf{D}_{0},\mathbf{E}}\frac{1}{2}\|\mathbf{C}\|_{F}^{2}+\frac{\lambda}{2}\|\mathbf{E}\|_{p}^{2}
\hspace{3mm} \mathrm{s.t.} \hspace{1mm} \underbrace{\mathbf{D}=\mathbf{D}_{0}+\mathbf{E}}_{\text{Robustness}}, \underbrace{\mathbf{D}_{0}=\mathbf{D}_{0}\mathbf{C}}_{\text{self-expression}}
\end{equation}

The proposed objective function mainly considers the constraints on the representation $\mathbf{C}$ and the errors $\mathbf{E}$. We enforce Frobenius norm on $\mathbf{C}$ because some recent works have shown that the Frobenius norm based representation is more computationally efficient than the $\ell_1$- and nuclear-norm based representation while achieving competitive performance in face recognition~\cite{Zhang2011} and subspace clustering~\cite{Peng2015robust}. Moreover, Frobenius-norm based representation has shared some desirable properties with nuclear-norm based representation as shown in our previous theoretical studies~\cite{Zhang2014fLRR,Peng2015Connections}.

The term $\mathbf{D}_{0}=\mathbf{D}_{0}\mathbf{C}$ is motivated by the recent development in subspace clustering~\cite{Elhamifar2013,Liu2013}, which can be further derived from the formulation of LLE (\textit{i.e.}, eqn.(\ref{eq2.3})). More specifically, ones reconstruct $\mathbf{D}_{0}$ by itself to obtain this so-called self-expression as the similarity of data set. The major differences between this work and the existing methods are: 1) the objective functions are different. Our method is based on Frobenius norm instead of $\ell_1$- or nuclear-norm; 2) the methods directly project the original data $\mathbf{D}$ into the space spanned by itself, whereas we simultaneously learn a clean data set $\mathbf{D}_{0}$ from $\mathbf{D}$ and compute the self-expression of $\mathbf{D}_{0}$; 3) PCE is proposed for subspace learning, whereas the methods are proposed for clustering. 

By formulating the error $\mathbf{E}$ as a term into our objective function, we can achieve robustness by $\mathbf{D}_{0}=\mathbf{D}-\mathbf{E}$. The constraint on $\mathbf{E}$ (\textit{i.e.}, $\|\cdot\|_{p}$) could be chosen as $\ell_1$-, $\ell_2$-, or $\ell_{2,1}$-norm. Different choices of $\|\cdot\|_{p}$ correspond to different types of noises. For example, $\ell_1$-norm is usually used to formulate the Laplacian noise, $\ell_2$-norm is adopted to describe the Gaussian noise, and $\ell_{2,1}$-norm is used to represent the sample-specified corruption such as outlier~\cite{Liu2013}. Here, we mainly consider the Gaussian noise which is commonly assumed in signal transmission problem. Thus, we have the following objective function:
\begin{equation}
	\label{eq3.1b}
	\min_{\mathbf{C},\mathbf{D}_{0},\mathbf{E}}\frac{1}{2}\|\mathbf{C}\|_{F}^{2}+\frac{\lambda}{2}\|\mathbf{E}\|_{F}^{2}
\hspace{3mm} \mathrm{s.t.} \hspace{1mm} \mathbf{D}=\mathbf{D}_{0}+\mathbf{E}, \mathbf{D}_{0}=\mathbf{D}_{0}\mathbf{C},
\end{equation}
where $\|\mathbf{E}\|_{F}$ denotes the error that follows the Gaussian distribution. It is worthy to point out that, although the above formulation only considers the Gaussian noise, our experimental result show that PCE is also robust to other corruptions such as random pixel corruption (non-additive noise) and real disguises. 

To efficiently solve (\ref{eq3.1b}), we first consider the case of corruption-free, \emph{i.e.}, $\mathbf{E}=\mathbf{0}$. In such a setting, (\ref{eq3.1b}) is simplified as follows:
\begin{equation}
	\label{eq3.2}
	\min_{\mathbf{C}}\|\mathbf{C}\|_{F}\hspace{3mm}\mathrm{s.t.}\hspace{1mm}\mathbf{D}=\mathbf{D}\mathbf{C}.
\end{equation}

Note that, $\mathbf{D}^{\dag}\mathbf{D}$ is a feasible solution to $\mathbf{D}=\mathbf{D}\mathbf{C}$, where $\mathbf{D}^{\dag}$ denotes the pseudo-inverse of $\mathbf{D}$. In~\cite{Peng2015Connections}, an unique minimizer to (\ref{eq3.2}) is given as follows.

\begin{lemma}
\label{thm1}
Let $\mathbf{D}=\mathbf{U}_{r} \mathbf{\Delta}_{r} \mathbf{V}_{r}^{T}$ be the skinny SVD of the data matrix $\mathbf{D}\ne \mathbf{0}$. The unique solution to
\begin{equation}
\label{thm1:equ1}
    \min\hspace{1mm}\|\mathbf{C}\|_{F} \hspace{3mm}
    \mathrm{s.t.} \hspace{1mm} \mathbf{D}= \mathbf{D} \mathbf{C},
\end{equation}
is given by $\mathbf{C}^{\ast} = \mathbf{V}_{r}\mathbf{V}_{r}^{T}$, where $r$ is the rank of $\mathbf{D}$ and $\mathbf{D}$ is a clean data set without any corruptions.
\end{lemma}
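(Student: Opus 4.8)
The plan is to solve the constrained least-Frobenius-norm problem by exhibiting the candidate $\mathbf{C}^{\ast}$ as a feasible point and then showing, via an orthogonal (Pythagorean) decomposition of the feasible set, that every other feasible matrix has strictly larger norm. First I would verify feasibility: since the columns of $\mathbf{V}_{r}$ are orthonormal, $\mathbf{V}_{r}^{T}\mathbf{V}_{r}=\mathbf{I}$, and therefore $\mathbf{D}\mathbf{C}^{\ast}=\mathbf{U}_{r}\mathbf{\Delta}_{r}\mathbf{V}_{r}^{T}\mathbf{V}_{r}\mathbf{V}_{r}^{T}=\mathbf{U}_{r}\mathbf{\Delta}_{r}\mathbf{V}_{r}^{T}=\mathbf{D}$, so $\mathbf{C}^{\ast}=\mathbf{V}_{r}\mathbf{V}_{r}^{T}$ satisfies the constraint $\mathbf{D}=\mathbf{D}\mathbf{C}$. (Equivalently, one checks $\mathbf{C}^{\ast}=\mathbf{D}^{\dag}\mathbf{D}=\mathbf{V}_{r}\mathbf{\Delta}_{r}^{-1}\mathbf{U}_{r}^{T}\mathbf{U}_{r}\mathbf{\Delta}_{r}\mathbf{V}_{r}^{T}=\mathbf{V}_{r}\mathbf{V}_{r}^{T}$, which already identifies $\mathbf{C}^{\ast}$ as the classical least-norm solution and serves as a useful sanity check.)

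Next I would characterize the entire feasible set relative to $\mathbf{C}^{\ast}$. Because $\mathbf{U}_{r}$ has orthonormal columns and $\mathbf{\Delta}_{r}$ is invertible (this is where the skinny SVD and $r=\mathrm{rank}(\mathbf{D})$ enter), $\mathbf{D}\mathbf{x}=\mathbf{0}$ holds if and only if $\mathbf{V}_{r}^{T}\mathbf{x}=\mathbf{0}$, so $\ker(\mathbf{D})=\mathrm{col}(\mathbf{V}_{r})^{\perp}$. Writing any feasible $\mathbf{C}$ as $\mathbf{C}=\mathbf{C}^{\ast}+\mathbf{N}$, the constraint forces $\mathbf{D}\mathbf{N}=\mathbf{D}-\mathbf{D}=\mathbf{0}$, hence every column of $\mathbf{N}$ lies in $\ker(\mathbf{D})=\mathrm{col}(\mathbf{V}_{r})^{\perp}$, i.e. $\mathbf{V}_{r}^{T}\mathbf{N}=\mathbf{0}$; conversely any $\mathbf{N}$ with this property yields a feasible $\mathbf{C}$. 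This cleanly splits the objective into a fixed part living in the row space of $\mathbf{D}$ and a free part orthogonal to it.

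Finally I would carry out the Pythagorean argument. Since $\mathbf{C}^{\ast}=\mathbf{V}_{r}\mathbf{V}_{r}^{T}$ has all its columns inside $\mathrm{col}(\mathbf{V}_{r})$ while $\mathbf{N}$ has columns orthogonal to it, the cross term vanishes: $\langle \mathbf{C}^{\ast},\mathbf{N}\rangle = trace\big((\mathbf{C}^{\ast})^{T}\mathbf{N}\big)=trace\big(\mathbf{V}_{r}\mathbf{V}_{r}^{T}\mathbf{N}\big)=0$, using $\mathbf{V}_{r}^{T}\mathbf{N}=\mathbf{0}$. Consequently $\|\mathbf{C}\|_{F}^{2}=\|\mathbf{C}^{\ast}\|_{F}^{2}+\|\mathbf{N}\|_{F}^{2}\ge \|\mathbf{C}^{\ast}\|_{F}^{2}$, with equality precisely when $\mathbf{N}=\mathbf{0}$, which establishes both the optimality and the uniqueness of $\mathbf{C}^{\ast}$ in one stroke. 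The main obstacle, and the only genuinely substantive step, is the vanishing of this cross term: it hinges on correctly pinning down $\ker(\mathbf{D})$ as exactly $\mathrm{col}(\mathbf{V}_{r})^{\perp}$ and recognizing $\mathbf{C}^{\ast}$ as the orthogonal projector onto the row space of $\mathbf{D}$; everything surrounding it is routine SVD bookkeeping.
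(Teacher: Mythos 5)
Your proof is correct and rests on the same idea as the paper's: decompose any feasible $\mathbf{C}$ as $\mathbf{C}^{\ast}$ plus an element of $\ker(\mathbf{D})=\mathrm{col}(\mathbf{V}_{r})^{\perp}$ and invoke a Pythagorean identity to get both optimality and uniqueness. The only difference is presentational --- the paper obtains the orthogonal split by left-multiplying by the blocks $\mathbf{V}_{r}^{T}$ and $\mathbf{V}_{c}^{T}$ of the orthogonal matrix $\mathbf{V}^{T}$, whereas you make the cross term $\mathrm{trace}\bigl((\mathbf{C}^{\ast})^{T}\mathbf{N}\bigr)$ vanish directly via the Frobenius inner product --- so no substantive comparison is needed.
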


\begin{proof}
Let $\mathbf{D}=\mathbf{U} \mathbf{\Delta} \mathbf{V}^{T}$ be the full SVD of $\mathbf{D}$. The pseudo-inverse of $\mathbf{D}$ is $\mathbf{D}^{\dag} =
\mathbf{V}_{r} \mathbf{\Delta}_{r}^{-1} \mathbf{U}_{r}^{T}$. Defining $\mathbf{V}_{c}$ by $\mathbf{V}^T=\left[
\begin{array}{c}
\mathbf{V}_r^T
\\
\mathbf{V}_c^T
\end{array}\right]$ and $\mathbf{V}_{c}^{T} \mathbf{V}_{r} = \mathbf{0}$. To prove that  $\mathbf{C}^{\ast} = \mathbf{V}_{r}\mathbf{V}_{r}^{T}$ is the unique solution to (\ref{thm1:equ1}), two steps are required. 

First, we prove that $\mathbf{C}^{\ast}$ is the minimizer to (\ref{thm1:equ1}), \emph{i.e.}, for any $\mathbf{X}$ satisfying $\mathbf{D}=\mathbf{D}\mathbf{X}$, it must hold that $\|\mathbf{X}\|_{F} \geq \|\mathbf{C}^{\ast}\|_{F}$. Since for any column orthogonal matrix $\mathbf{P}$, it must hold that $\| \mathbf{PM} \|_{F} = \| \mathbf{M} \|_{F}$. Then, we have
\begin{eqnarray}
\label{thm1:equ2}
\|\mathbf{X}\|_{F}
& = & \left\|\left[
\begin{array}{ll}
\mathbf{V}^T_r
\\
\mathbf{V}^T_c
\end{array}
\right] \left[\mathbf{C}^*+\left( \mathbf{X}-\mathbf{C}^{\ast} \right) \right] \right\|_{F}
\notag\\
& = & \left\| \left[
\begin{array}{ll}
\mathbf{V}^T_r \mathbf{C}^{\ast}+\mathbf{V}^T_r \left(\mathbf{X}-\mathbf{C}^{\ast}\right)
\\
\mathbf{V}^T_c \mathbf{C}^{\ast} + \mathbf{V}^T_c(\mathbf{X} - \mathbf{C}^{\ast})
\end{array}
\right] \right\|_{F}.
\end{eqnarray}

As $\mathbf{C}^{\ast}$ satisfies $\mathbf{D} = \mathbf{D} \mathbf{C}^{\ast}$, then $\mathbf{D} \left(\mathbf{X}-\mathbf{C}^{\ast}\right) = \mathbf{0}$, \emph{i.e.}, $\mathbf{U}_{r} \mathbf{\Delta}_{r}
\mathbf{V}_{r}^{T} \left(\mathbf{X}-\mathbf{C}^{\ast}\right) = \mathbf{0}$. Since $\mathbf{U}_{r} \mathbf{\Delta}_{r}\ne \mathbf{0}$, 
$\mathbf{V}^{T}_{r}(\mathbf{X}-\mathbf{C}^{{\ast}})=\mathbf{0}$. Denote $\mathbf{\Gamma} = \mathbf{\Delta}_{r}^{-1}
\mathbf{U}_{r}^{T} \mathbf{D}$, then $\mathbf{C}^{\ast}=\mathbf{V}_{r} \mathbf{\Gamma}$. Because $\mathbf{V}_{c}^{T}\mathbf{V}_{r} = \mathbf{0}$, we have $\mathbf{V}_{c}^{T} \mathbf{C}^{{\ast}} = \mathbf{V}_{c}^{T} \mathbf{V}_{r} \mathbf{\Gamma}
= \mathbf{0}$. Then, it follows that
\begin{equation}
\label{thm1:equ3}
\left\| \mathbf{X} \right\|_{F} = \left\| \left[
\begin{array}{c}
\mathbf{\Gamma}
\\
\mathbf{V}_{c}^{T}(\mathbf{X}-\mathbf{C}^{\ast})
\end{array}
\right] \right\|_{F}.
\end{equation}

Since for any matrixes $\mathbf{M}$ and $\mathbf{N}$ with the same number of columns, it holds that
\begin{equation}
\label{thm1:equ3b}
\left\| \left[
\begin{array}{c}
\mathbf{M}
\\
\mathbf{N}
\end{array}
\right] \right\|_{F}^{2} = \|\mathbf{M}\|_{F}^{2} + \|\mathbf{N}\|_{F}^{2}.
\end{equation}

From (\ref{thm1:equ3}) and (\ref{thm1:equ3b}), we have
\begin{equation}
\label{thm1:equ4}
\left\| \mathbf{X} \right\|_{F}^{2} = \|\mathbf{\Gamma}\|_{F}^{2} +
\|\mathbf{V}_{c}^{T}(\mathbf{X}-\mathbf{C}^{\ast})\|_{F}^{2},
\end{equation}
which shows that $\|\mathbf{X}\|_{F} \geq \|\mathbf{\Gamma}\|_{F}$.

Furthermore, since
\begin{equation}
\label{thm1:equ5}
\|\mathbf{\Gamma}\|_{F}=\|\mathbf{V}_{r} \mathbf{\Gamma}\|_{F} = \| \mathbf{C}^{\ast} \|_{F},
\end{equation}
we have $\|\mathbf{X}\|_{F} \geq \|\mathbf{C}^{\ast}\|_{F}$.

Second, we prove that $\mathbf{C}^{\ast}$ is the unique solution of (\ref{thm1:equ1}). Let $\mathbf{X}$ be another
minimizer, then, $\mathbf{D} = \mathbf{D} \mathbf{X}$ and
$\|\mathbf{X}\|_{F}=\|\mathbf{C}^{\ast}\|_{F}$. From (\ref{thm1:equ4}) and (\ref{thm1:equ5}),
\begin{equation}
\|\mathbf{X}\|_{F}^{2} = \|\mathbf{C}^{\ast}\|_{F}^{2} + \|\mathbf{V}_{c}^{T}
\left(\mathbf{X} - \mathbf{C}^{\ast}\right)\|_{F}^{2}.
\end{equation}

Since $\|\mathbf{X}\|_{F} = \|\mathbf{C}^{\ast}\|_{F}$, it must hold that
$\|\mathbf{V}_{c}^{T}\left(\mathbf{X}-\mathbf{C}^{\ast}\right)\|_{F} = 0$, and then
$\mathbf{V}_{c}^{T}\left(\mathbf{X} - \mathbf{C}^{\ast}\right) = \mathbf{0}$. Together with $\mathbf{V}_{r}^{T}
\left(\mathbf{X} - \mathbf{C}^{\ast}\right) = \mathbf{0}$, this gives $\mathbf{V}^{T}
\left(\mathbf{X} - \mathbf{C}^{\ast}\right) = \mathbf{0}$. Because $\mathbf{V}$ is an orthogonal matrix, it must hold that $\mathbf{X} = \mathbf{C}^{\ast}$. 
\end{proof}

Based on Lemma~\ref{thm1}, the following theorem can be used to solve the robust version of PCE (\textit{i.e.}, $\mathbf{E}\ne \mathbf{0}$).
\begin{theorem}
\label{thm2}
Let $\mathbf{D}=\mathbf{U\Sigma}\mathbf{V}^{T}$ be the full SVD of $\mathbf{D}\in \mathds{R}^{m\times n}$, where the diagonal entries of $\mathbf{\Sigma}$ are in descending order, $\mathbf{U}$ and $\mathbf{V}$ are corresponding left and right singular vectors, respectively. Suppose there exists a clean data set and errors, denoted by $\mathbf{D}_{0}$ and $\mathbf{E}$, respectively. The optimal $\mathbf{C}$ to (\ref{eq3.1b})
is given by $\mathbf{C}^{\ast}=\mathbf{V}_{k}\mathbf{V}_{k}^{T}$, where $\lambda$ is a balanced factor, $\mathbf{V}_{k}$ consists of the first $k$ right singular vectors of $\mathbf{D}$, $k=\mathrm{argmin}_{r} r+\lambda\sum_{i>r}\sigma_{i}^{2}$, and $\sigma_{i}$ denotes the $i$-th diagonal entry of $\mathbf{\Sigma}$.
\end{theorem}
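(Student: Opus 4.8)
The plan is to exploit the nested structure of (\ref{eq3.1b}) by minimizing over $\mathbf{C}$ first and then over the pair $(\mathbf{D}_{0},\mathbf{E})$. For any fixed feasible clean matrix $\mathbf{D}_{0}$, the constraint $\mathbf{D}_{0}=\mathbf{D}_{0}\mathbf{C}$ is precisely the constraint of Lemma~\ref{thm1} with the data matrix replaced by $\mathbf{D}_{0}$, so the inner minimizer is $\mathbf{C}^{\ast}=\mathbf{V}_{r}\mathbf{V}_{r}^{T}$, where $r=\mathrm{rank}(\mathbf{D}_{0})$ and $\mathbf{V}_{r}$ collects the right singular vectors of $\mathbf{D}_{0}$. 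The crucial computation is that this minimizer satisfies
\[
\tfrac{1}{2}\|\mathbf{C}^{\ast}\|_{F}^{2}=\tfrac{1}{2}\,trace(\mathbf{V}_{r}\mathbf{V}_{r}^{T})=\tfrac{1}{2}\,\mathrm{rank}(\mathbf{D}_{0}),
\]
so that, after eliminating $\mathbf{C}$, the objective depends on $\mathbf{D}_{0}$ only through its rank and its Frobenius distance to $\mathbf{D}$.

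Substituting $\mathbf{E}=\mathbf{D}-\mathbf{D}_{0}$, the residual problem becomes
\[
\min_{\mathbf{D}_{0}}\ \mathrm{rank}(\mathbf{D}_{0})+\lambda\|\mathbf{D}-\mathbf{D}_{0}\|_{F}^{2}.
\]
I would attack this by stratifying over the integer value $r=\mathrm{rank}(\mathbf{D}_{0})$. For each fixed $r$ the rank penalty is constant and the problem reduces to the best rank-$r$ Frobenius approximation of $\mathbf{D}$; by the classical Eckart--Young--Mirsky theorem the optimum is the truncated SVD $\mathbf{D}_{0}=\mathbf{U}_{r}\mathbf{\Sigma}_{r}\mathbf{V}_{r}^{T}$, with optimal residual $\|\mathbf{D}-\mathbf{D}_{0}\|_{F}^{2}=\sum_{i>r}\sigma_{i}^{2}$. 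Hence the minimal objective attainable at rank $r$ equals $\tfrac{1}{2}\bigl(r+\lambda\sum_{i>r}\sigma_{i}^{2}\bigr)$, and minimizing this scalar expression over $r$ yields exactly $k=\mathrm{argmin}_{r}\, r+\lambda\sum_{i>r}\sigma_{i}^{2}$.

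To finish, I would read off the optimal representation. Since the optimal clean matrix is the rank-$k$ truncated SVD of $\mathbf{D}$, its right singular vectors are precisely the first $k$ columns $\mathbf{V}_{k}$ of $\mathbf{V}$; applying Lemma~\ref{thm1} to this particular $\mathbf{D}_{0}$ returns $\mathbf{C}^{\ast}=\mathbf{V}_{k}\mathbf{V}_{k}^{T}$, which is the claimed minimizer.

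The main obstacle is making this two-stage minimization airtight rather than heuristic. Concretely, I must confirm that no optimality is lost when decoupling, i.e. that the global optimum of the joint problem is attained by first solving the inner problem of Lemma~\ref{thm1} and then the outer rank-versus-residual trade-off. I also need the truncated-SVD minimizer to have rank exactly $r$ so that Lemma~\ref{thm1} genuinely applies; this requires handling vanishing or tied singular values, where appending a zero-singular direction inflates the rank term without lowering the residual and is therefore never selected by the $\mathrm{argmin}$, keeping the formula self-consistent. Finally I must check the basis alignment, namely that the right singular basis of the optimal $\mathbf{D}_{0}$ coincides with the top-$k$ right singular vectors of the original $\mathbf{D}$ and not merely some rotated basis spanning the same subspace. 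Once these feasibility and alignment points are settled, the remainder is the routine Eckart--Young estimate combined with Lemma~\ref{thm1}.
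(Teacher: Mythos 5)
Your proposal is correct, and it takes a genuinely different route from the paper. The paper proves this theorem by a first-order (Lagrangian) argument: it forms $\mathcal{L}(\mathbf{D}_{0},\mathbf{C})$ with a multiplier $\mathbf{\Lambda}$ for the constraint $\mathbf{D}_{0}=\mathbf{D}_{0}\mathbf{C}$, derives from the stationarity conditions that $\mathbf{\Lambda}=\mathbf{U}_{r}\mathbf{\Sigma}_{r}^{-1}\mathbf{V}_{r}^{T}+\mathbf{U}_{c}\mathbf{M}$, concludes that $\mathbf{E}=\frac{1}{\lambda}\mathbf{U}_{c}\mathbf{\Sigma}_{c}\mathbf{V}_{c}^{T}$ so that the SVD of $\mathbf{D}$ splits into a clean top block and an error bottom block, and only then minimizes the resulting cost $\frac{1}{2}r+\frac{\lambda}{2}\sum_{i>r}\sigma_{i}^{2}$ over $r$. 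You instead eliminate variables: Lemma~\ref{thm1} collapses the inner minimization over $\mathbf{C}$ to $\frac{1}{2}\mathrm{rank}(\mathbf{D}_{0})$, the constraint $\mathbf{D}=\mathbf{D}_{0}+\mathbf{E}$ eliminates $\mathbf{E}$, and the outer problem $\min_{\mathbf{D}_{0}}\mathrm{rank}(\mathbf{D}_{0})+\lambda\|\mathbf{D}-\mathbf{D}_{0}\|_{F}^{2}$ is settled by stratifying over the rank and invoking Eckart--Young--Mirsky. Your route buys global optimality directly (the two-stage decoupling $\min_{x}\min_{y}$ is exact, and Eckart--Young is a global statement), whereas the paper's stationarity argument contains steps that are harder to make airtight --- notably the assertion that $\|\mathbf{E}\|_{F}^{2}$ is minimized when $\mathbf{M}\mathbf{V}_{c}$ is diagonal. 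What the paper's derivation buys in exchange is the explicit closed form of the optimal error, $\mathbf{E}=\mathbf{U}_{c}\mathbf{\Sigma}_{c}^{\prime}\mathbf{V}_{c}^{T}$ built from the trailing singular triplets, which the paper reuses for its interpretation of robustness (Fig.~\ref{fig5}); in your scheme this falls out anyway as $\mathbf{D}-\mathbf{U}_{k}\mathbf{\Sigma}_{k}\mathbf{V}_{k}^{T}$, so nothing is lost. The caveats you flag are the right ones and are handled adequately: exact-rank versus rank-at-most-$r$ is resolved by your observation that padding with a zero singular direction raises the rank term without reducing the residual, and the basis-alignment issue only arises when $\sigma_{k}=\sigma_{k+1}$, a degenerate tie under which the truncated SVD (hence $\mathbf{C}^{\ast}$) is non-unique --- a caveat the paper's statement and proof silently ignore as well.
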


\begin{proof}
(\ref{eq3.1b}) can be rewritten as
\begin{equation}
\label{thm2:equ2}
\min_{\mathbf{D}_{0}, \mathbf{C}}\hspace{1mm}\frac{1}{2}\|\mathbf{C}\|_{F}^{2}+\frac{\lambda}{2}\|\mathbf{D}-\mathbf{D}_{0}\|_{F}^{2} \hspace{3mm}\mathrm{s.t.}\hspace{1mm} \mathbf{D}_{0}=\mathbf{D}_{0}\mathbf{C},
\end{equation}

Let $\mathbf{D}_{0}^{\ast}=\mathbf{U}_{r}\mathbf{\Sigma}_{r}\mathbf{V}_{r}^{T}$ be the skinny SVD of $\mathbf{D}_{0}$, where $r$ is the rank of $\mathbf{D}_{0}$. Let $\mathbf{U}_{c}$ and $\mathbf{V}_{c}$ be the basis that orthogonal to $\mathbf{U}_{r}$ and $\mathbf{V}_{r}$, respectively. Clearly, $\mathbf{I}=\mathbf{V}_{r}\mathbf{V}_{r}^{T}+\mathbf{V}_{c}\mathbf{V}_{c}^{T}$. By Lemma~\ref{thm1}, the representation over the clean data $\mathbf{D}_{0}$ is given by $\mathbf{C}^{\ast}=\mathbf{V}_{r}\mathbf{V}_{r}^{T}$.  Next, we will bridge $\mathbf{V}_{r}$ and $\mathbf{V}$.

Using Lagrange method, we have
\begin{equation}
\label{thm2:equ3}
\mathcal{L}(\mathbf{D}_{0}, \mathbf{C})=\frac{1}{2}\left
\|\mathbf{C}\right\|_{F}^{2}+\frac{\lambda}{2}\left\|\mathbf{D}-\mathbf{D}_{0}\right\|_{F}^{2}+<\mathbf{\Lambda}, \mathbf{D}_{0}-\mathbf{D}_{0}\mathbf{C}>,
\end{equation}
where $\mathbf{\Lambda}$ denotes the Lagrange multiplier and the operator $<\cdot>$ denotes dot product. 

 Letting $\frac{\partial{\mathcal{L}(\mathbf{D}_{0}, \mathbf{C})}}{\partial{\mathbf{D}_{0}}}=0$, it gives that
\begin{equation}
\label{thm2:equ4}
\mathbf{\Lambda}\mathbf{V}_{c}\mathbf{V}_{c}^{T}=\lambda\mathbf{E}.
\end{equation}

Letting $\frac{\partial{\mathcal{L}(\mathbf{D}_{0}, \mathbf{C})}}{\partial{\mathbf{C}}}=0$, it gives that
\begin{equation}
\label{thm2:equ5}
\mathbf{V}_{r}\mathbf{V}_{r}^{T}=\mathbf{V}_{r}\mathbf{\Sigma}_{r}\mathbf{U}_{r}^{T}\mathbf{\Lambda}.
\end{equation}

From (\ref{thm2:equ5}), $\mathbf{\Lambda}$ must be in the form of $\mathbf{\Lambda}=\mathbf{U}_{r}\mathbf{\Sigma}_{r}^{-1}\mathbf{V}_{r}^{T}+\mathbf{U}_{c}\mathbf{M}$ for some $\mathbf{M}$. Substituting $\mathbf{\Lambda}$ into (\ref{thm2:equ4}), it gives that 
\begin{equation}
\label{thm2:equ6}
\mathbf{U}_{c}\mathbf{M}\mathbf{V}_{c}\mathbf{V}_{c}^{T}=\lambda\mathbf{E}.
\end{equation}

Thus, we have $\|\mathbf{E}\|_{F}^{2}=\frac{1}{\lambda^{2}}\|\mathbf{U}_{c}\mathbf{M}\mathbf{V}_{c}\mathbf{V}_{c}^{T}\|=\frac{1}{\lambda^{2}}\|\mathbf{M}\mathbf{V}_{c}\|_{F}^{2}$. Clearly, $\|\mathbf{E}\|_{F}^{2}$ is minimized when $\mathbf{M}\mathbf{V}_{c}$ is a diagonal matrix and can be denoted by $\mathbf{M}\mathbf{V}_{c}=\mathbf{\Sigma}_{c}$, \emph{i.e.}, $\mathbf{E}=\frac{1}{\lambda}\mathbf{U}_{c}\mathbf{\Sigma}_{c}\mathbf{V}_{c}^{T}$. Thus, the SVD of $\mathbf{D}$ could be chosen as 
\begin{equation}
\mathbf{D} = \mathbf{U}\mathbf{\Sigma}\mathbf{V}^{T}=
\left[                 
      \mathbf{U}_{r}\  \mathbf{U}_{c}\\  
\right]     
\left[                 
  \begin{array}{cc}   
      \mathbf{\Sigma}_{r} & \mathbf{0}\\  
      \mathbf{0} &  \frac{1}{\lambda}\mathbf{\Sigma}_{c}\\ 
     \end{array}
\right]  
\left[                 
  \begin{array}{c}   
      \mathbf{V}_{r}^{T}\\  
      \mathbf{V}_{c}^{T}\\ 
     \end{array}
\right]. 
\end{equation}

Thus, the minimal cost of (\ref{thm2:equ2}) is given by 
\begin{equation}
\label{thm2:equ7}
\begin{aligned}
\mathcal{L}_{\min}(\mathbf{D}_{0}^{\ast},\mathbf{C}^{\ast})
&=\frac{1}{2}\|\mathbf{V}_{r}\mathbf{V}_{r}^{T}\|_{F}^{2}+\frac{\lambda}{2}\|\frac{1}{\lambda}\mathbf{\Sigma}_{c }\|_{F}^{2}\\
&=\frac{1}{2} r+\frac{\lambda}{2}\sum_{i=r+1}^{\min\{m,n\}}\sigma_{i}^{2},
\end{aligned}
\end{equation}
where $\sigma_{i}$ is the $i$-th largest singular value of $\mathbf{D}$. Let $k$ be the optimal $r$ to (\ref{thm2:equ7}), then we have $k=\mathrm{argmin}_{r}r+\lambda\sum_{i>r}\sigma_{i}^{2}$. 
\end{proof}

Theorem~\ref{thm2} shows that the skinny SVD of $\mathbf{D}$ is automatically separated into two parts, the top and the bottom one correspond to a desired clean data $\mathbf{D}_{0}$ and the possible corruptions $\mathbf{E}$, respectively. Such a PCA-like result provides a good explanation toward the robustness of our method, \textit{i.e.}, the clean data can be recovered by using the first $k$ leading singular vectors of $\mathbf{D}$. It should be pointed out that the above theoretical results (Lemma 1 and Theorem 1) have been presented in~\cite{Peng2015Connections} for building the connections between Frobenius norm based representation and nuclear norm based representation in theory. Different from \cite{Peng2015Connections}, this paper mainly considers how to utilize this result to achieve robust and automatic subspace learning. 

\begin{figure*}[!t]
\centering{
\subfigure[The coefficient matrix $\mathbf{C}^{\ast}$ obtained by PCE.]{\label{fig1a}\includegraphics[width=0.41\textwidth]{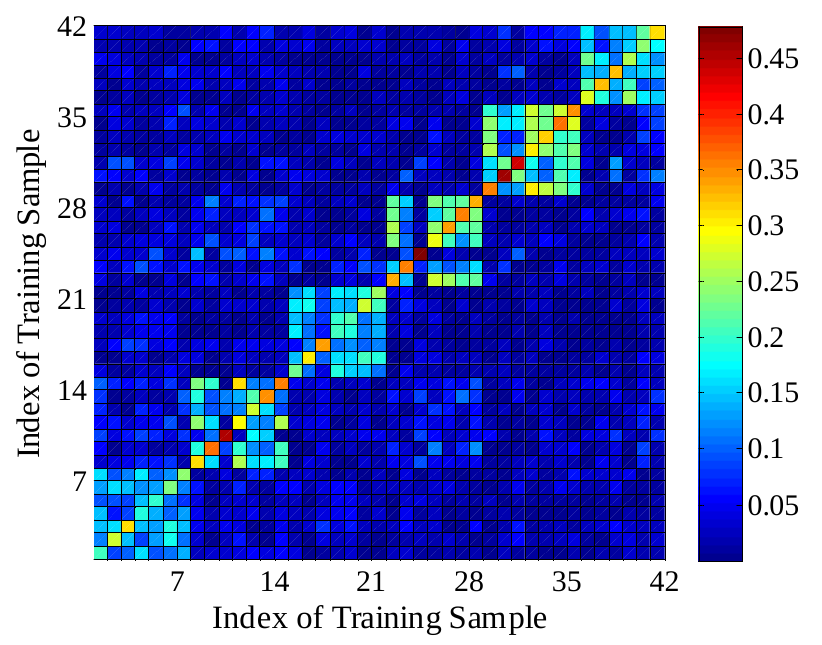}}\hspace{6mm}
\subfigure[Singular values of $\mathbf{C}^{\ast}$]{\label{fig1b}\includegraphics[width=0.43\textwidth]{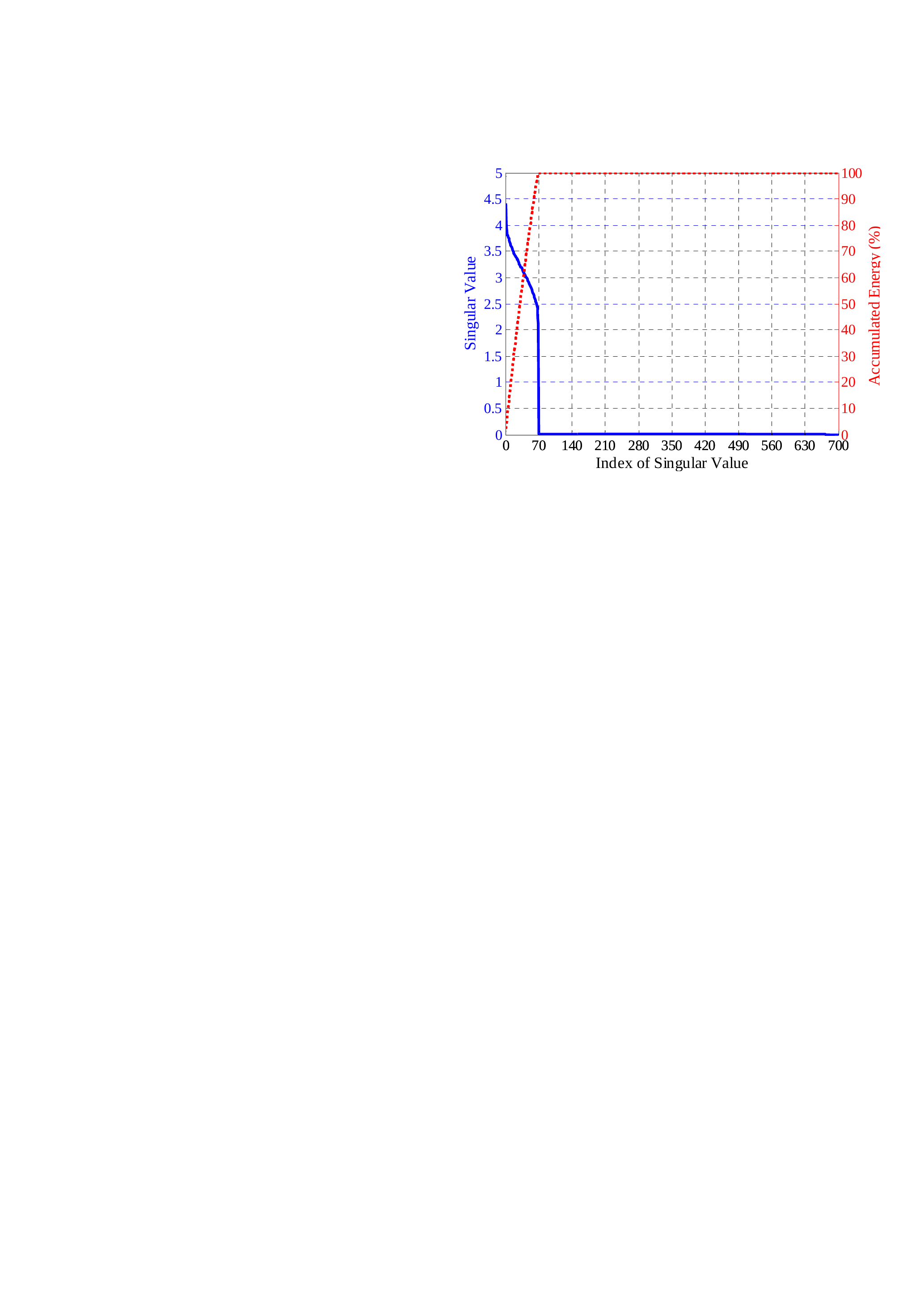}}
}
\caption{\label{fig1} An illustration using 700 AR facial images. (a) PCE can obtain a block-diagonal affinity matrix, which is benefit to classification. For better illustration, we only show the affinity matrix of the data points belonging to the first seven categories. (b) The intrinsic dimension of the used whole data set is exactly 69, \emph{i.e.}, $m^{\prime}=k=69$ for 700 samples. This result is obtained without truncating the trivial singular values like PCA. In the Fig.~\ref{fig1b}, the dotted line denotes the accumulated energy of the first $k$ singular value.}
\end{figure*}

\figurename~\ref{fig1} gives an example to show the effectiveness of PCE. We carried out experiment using 700 clean AR facial images~\cite{Martinez1998} as training data that distribute over 100 individuals. \figurename~\ref{fig1a} shows the coefficient matrix $\mathbf{C}^{\ast}$ obtained by PCE. One can find that the matrix is approximately block-diagonal, \emph{i.e.}, $c_{ij}\ne 0$ if and only if the corresponding points $\mathbf{d}_{i}$ and $\mathbf{d}_{j}$ belong to the same class. Moreover, we perform SVD over $\mathbf{C}^{\ast}$ and show the singular values of $\mathbf{C}^{\ast}$ in~\figurename~\ref{fig1b}. One can find that only the first $69$ singulars values are nonzero. In other words, the intrinsic dimension of the entire data set is $69$ and the first $69$ singular values can preserve 100\% information. It should be pointed out that, PCE does not set a parameter to truncate the trivial singular values like PCA and PCA-like methods~\cite{Yan2007A}, which incorporates all energy into a small number of dimension.

\subsection{Intrinsic Dimension Estimation and Projection Learning}
 
After obtaining the coefficient matrix $\mathbf{C}^{\ast}$, PCE builds a similarity graph and embeds it into an $m^{\prime}$-dimensional space by following NPE~\cite{He2005}, \emph{i.e.},
\begin{equation}
	\label{eq3.3}
	\min_{\mathbf{\Theta}} \frac{1}{2}\|\mathbf{\Theta}^{T}\mathbf{D} - \mathbf{\Theta}^{T}\mathbf{D}\mathbf{A}\|_{F}^{2}, \hspace{3mm} \mathrm{s.t.}\hspace{1mm}\mathbf{\Theta}^{T}\mathbf{D}\mathbf{D}^{T}\mathbf{\Theta}=\mathbf{I},
\end{equation}
where $\mathbf{\Theta}\in \mathds{R}^{m\times m^{\prime}}$ denotes the projection matrix. 

One challenging problem arising in dimension reduction is to determine the value of $m^{\prime}$, most existing methods experimentally set this parameter, which is very computational inefficiency. To solve this problem, we propose estimating the feature dimension using the rank of the affinity matrix $\mathbf{A}$ and have the following theorem:
\begin{theorem}
	\label{thm3} 
	For a given data set $\mathbf{D}$, the feature dimension $m^{\prime}$ is upper bounded by the rank of $\mathbf{C}^{\ast}$, i.e.,
	\begin{equation}
		\label{thm3:equ1}
		m^{\prime} \leq k.
	\end{equation}
\end{theorem}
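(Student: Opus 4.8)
The plan is to recognize the embedding step (\ref{eq3.3}) as a generalized eigenvalue problem and then count the projection directions that drive the reconstruction residual to zero; this count will turn out to be exactly $\mathrm{rank}(\mathbf{C}^{\ast})=k$, which bounds $m^{\prime}$.

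First I would invoke Theorem~\ref{thm2}, so that the affinity matrix used in (\ref{eq3.3}) is $\mathbf{A}=\mathbf{C}^{\ast}=\mathbf{V}_{k}\mathbf{V}_{k}^{T}$ and hence $\mathrm{rank}(\mathbf{C}^{\ast})=k$. Rewriting the objective of (\ref{eq3.3}) as $\frac{1}{2}\|\mathbf{\Theta}^{T}\mathbf{D}(\mathbf{I}-\mathbf{A})\|_{F}^{2}$ and attaching the constraint $\mathbf{\Theta}^{T}\mathbf{D}\mathbf{D}^{T}\mathbf{\Theta}=\mathbf{I}$ through a Lagrange multiplier, the stationarity condition produces the generalized eigenvalue problem $\mathbf{M}\mathbf{w}=\lambda\mathbf{N}\mathbf{w}$ with $\mathbf{M}=\mathbf{D}(\mathbf{I}-\mathbf{A})(\mathbf{I}-\mathbf{A})^{T}\mathbf{D}^{T}$ and $\mathbf{N}=\mathbf{D}\mathbf{D}^{T}$. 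The columns of $\mathbf{\Theta}$ are the generalized eigenvectors associated with the smallest eigenvalues, so $m^{\prime}$ is the number of such directions that may be retained.

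The crux is an explicit computation of $\mathbf{M}$ in the SVD basis. Using $\mathbf{D}=\mathbf{U}\mathbf{\Sigma}\mathbf{V}^{T}$ and $\mathbf{A}=\mathbf{V}_{k}\mathbf{V}_{k}^{T}$ gives $\mathbf{D}(\mathbf{I}-\mathbf{A})=\mathbf{D}-\mathbf{U}_{k}\mathbf{\Sigma}_{k}\mathbf{V}_{k}^{T}$, namely the data with their rank-$k$ leading component removed, which has rank $r-k$; consequently $\mathbf{M}$ has rank $r-k$. To handle the rank deficiency of $\mathbf{N}$ I would restrict $\mathbf{w}$ to the range of $\mathbf{D}$ via the whitening substitution $\mathbf{w}=\mathbf{U}_{r}\mathbf{\Sigma}_{r}^{-1}\mathbf{p}$, under which the constraint becomes $\mathbf{p}^{T}\mathbf{p}=1$ and the problem reduces to the ordinary eigenproblem for $\mathbf{\Sigma}_{r}^{-1}\mathbf{U}_{r}^{T}\mathbf{M}\mathbf{U}_{r}\mathbf{\Sigma}_{r}^{-1}$. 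A short calculation shows this matrix equals $\mathrm{diag}(0,\ldots,0,1,\ldots,1)$ with $k$ zeros followed by $r-k$ ones.

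Finally I would read off the conclusion: exactly $k$ generalized eigenvalues vanish, and these are the only directions along which the reconstruction residual $\mathbf{\Theta}^{T}\mathbf{D}(\mathbf{I}-\mathbf{A})$ is annihilated, i.e., along which the relation encoded by $\mathbf{C}^{\ast}$ is preserved exactly. Since a faithful embedding keeps only these optimal zero-eigenvalue directions, the feature dimension obeys $m^{\prime}\leq k=\mathrm{rank}(\mathbf{C}^{\ast})$. I expect the main obstacle to be the careful treatment of the singular $\mathbf{N}=\mathbf{D}\mathbf{D}^{T}$ (the case $m>r$) and the verification that the zero-eigenvalue subspace has dimension precisely $k$ once the null space of $\mathbf{M}$ is intersected with the range of $\mathbf{N}$; pinning down why $m^{\prime}$ is governed by the count of vanishing eigenvalues, rather than simply by $\mathrm{rank}(\mathbf{N})=r$, is the conceptual heart of the argument.
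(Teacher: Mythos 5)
Your proposal is correct and follows essentially the same route as the paper: both reduce the embedding step (\ref{eq3.3}) to a generalized eigenvalue problem with $\mathbf{N}=\mathbf{D}\mathbf{D}^{T}$ and exploit that $\mathbf{A}=\mathbf{C}^{\ast}=\mathbf{V}_{k}\mathbf{V}_{k}^{T}$ is a rank-$k$ orthogonal projector, so that only $k$ directions can carry the preserved relation. The paper phrases this as maximizing the Rayleigh quotient of $\mathbf{D}(\mathbf{A}+\mathbf{A}^{T}-\mathbf{A}\mathbf{A}^{T})\mathbf{D}^{T}=\mathbf{D}\mathbf{A}\mathbf{D}^{T}$, which has rank $k$ and hence at most $k$ nonzero eigenvalues; this is exactly the complement of your residual form $\mathbf{D}(\mathbf{I}-\mathbf{A})(\mathbf{I}-\mathbf{A})^{T}\mathbf{D}^{T}$, and your whitening computation simply makes the paper's eigenvalue count explicit.
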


\begin{proof}
	It is easy to see that Eq.(\ref{eq3.3}) has the following equivalent variation:
	\begin{equation}
		\label{thm3:equ2}
		\mathbf{\Theta}^{\ast}=\mathrm{argmax}\frac{\mathbf{\Theta}^{T}\mathbf{D}(\mathbf{A}+\mathbf{A}^{T}-\mathbf{A}\mathbf{A}^{T})\mathbf{D}^{T}\mathbf{\Theta}}
		{\mathbf{\Theta}^{T}\mathbf{D}\mathbf{D}^{T}\mathbf{\Theta}}.
	\end{equation}	
	
	We can see that the optimal solution to Eq.(\ref{thm3:equ2}) consists of $m^{\prime}$ leading eigenvectors of the following generalized Eigen decomposition problem:
	\begin{equation}
	\label{thm3:equ3}
		\mathbf{D}(\mathbf{A}+\mathbf{A}^{T}-\mathbf{A}\mathbf{A}^{T})\mathbf{D}^{T}\mathbf{\theta}=\sigma\mathbf{D}\mathbf{D}^{T}\mathbf{\theta},
	\end{equation}
where $\sigma$ is the corresponding singular value of the problem. 

As $\mathbf{A}=\mathbf{A}^{T}=\mathbf{A}\mathbf{A}^{T}$, then (\ref{thm3:equ3}) can be rewritten 
\begin{equation}
	\label{thm3:equ4}
	\mathbf{D}\mathbf{A}\mathbf{D}^{T}\mathbf{\theta} =\sigma\mathbf{D}\mathbf{D}^{T} \mathbf{\theta}.
\end{equation}

From Theorem~\ref{thm2}, we have $rank(\mathbf{D})>rank(\mathbf{A})=k$, where $k$ is calculated according to Theorem~\ref{thm2}. Thus, the above generalized Eigen decomposition problem has at most $k$ eigenvalues larger than zeros, \textit{i.e.},  the rank of $\mathbf{\Theta}$ is upperly bounded by $k$. This gives the result. 
\end{proof}


Algorithm~\ref{algorithm1} summarizes the procedure of PCE. Note that, it does not require $\mathbf{A}$ to be a symmetric matrix.


\begin{algorithm}[!h]
    \caption{Automatic Subspace Learning via Principal Coefficients Embedding}
    \label{algorithm1}
    \begin{algorithmic}[1]
    \REQUIRE
    A collection of training data points $\mathbf{D}=\{\mathbf{d}_i\} $ sampled from a union of linear subspaces and the balanced parameter $\lambda>0$. 
        \STATE Perform the full SVD or skinny SVD on $\mathbf{D}$, \emph{i.e.}, $\mathbf{D}=\mathbf{U}\mathbf{\Sigma}\mathbf{V}^{T}$, and get the $\mathbf{C}=\mathbf{V}_{k}\mathbf{V}_{k}^{T}$, where $\mathbf{V}_{k}$ consists of $k$ column vector of $\mathbf{V}$ corresponding to $k$ largest singular values, where $k=\mathrm{argmin}_{r}r+\lambda\sum_{i>r}\sigma_{i}^{2}(\mathbf{D})$ and $\sigma_{i}(\mathbf{D})$ is the $i$-th singular value of $\mathbf{D}$.
    \STATE Construct a similarity graph via $\mathbf{A}=\mathbf{C}$.
    \STATE Embed $\mathbf{A}$ into a $k$-dimensional space and get the projection matrix $\mathbf{\Theta}\in \mathds{R}^{ m\times k}$ that consists of the eigenvectors corresponding to the $k$ largest eigenvalues of the following generalized eigenvector problem Eq.(\ref{thm3:equ2}).
    \ENSURE The projection matrix $\mathbf{\Theta}$. 
    For any data point $\mathbf{y}\in span\{\mathbf{D}\}$, its low-dimensional representation can be obtained by $\mathbf{z}=\mathbf{\Theta}^{T}\mathbf{y}$.
    \end{algorithmic}
\end{algorithm}

\subsection{Computational Complexity Analysis}
\label{sec3.3}

For a training data set $\mathbf{D}\in\mathds{R}^{m\times n}$, PCE performs the skinny SVD over $\mathbf{D}$ in $O(m^{2}n+mn^{2}+n^{3})$. However, a number of fast SVD methods can speed up this procedure. For example, the complexity can be reduced to $O(mnr)$ by Brand's method~\cite{Brand2006}, where $r$ is the rank of $\mathbf{D}$. Moreover, PCE estimates the feature dimension $k$ in $O(rlogr)$ and solves a sparse generalized eigenvector problem in $O(mn+mn^2)$ with Lanczos eigensolver. Putting everything together, the time complexity of PCE is $O(mn+mn^2)$ due to $r\ll \min(m,n)$. 


\section{Experiments and Results}
\label{sec4}

In this section, we reported the performance of PCE and six state-of-the-art unsupervised feature extraction methods including Eigenfaces~\cite{Turk1991}, Locality Preserving Projections (LPP)~\cite{He2003,He2005Lap}, neighbourhood Preserving Embedding (NPE)~\cite{He2005}, L1-graph~\cite{Cheng2010}, Non-negative Matrix Factorization (NMF)~\cite{Hoyer2004,Guan2011:Mani}, RPCA~\cite{Candes2011}, NeNMF~\cite{Guan2012:NeNMF}, and Robust Orthonormal Subspace Learning (ROSL)~\cite{Shu2014:Robust}. Noticed that, NeNMF is one of the most efficient NMF solvers, which can effectively overcome the slow convergence rate, numerical instability and non-convergence issue of NMF. All algorithms are implemented in MATLAB. The used data sets and the codes of our algorithm can be downloaded from the website \textcolor{blue}{\url{http://machineilab.org/users/pengxi}}.

\subsection{Experimental Setting and Data Sets}

We implemented a fast version of L1-graph by using Homotopy algorithm~\cite{Osborne2000} to solve the $\ell_1$-minimization problem. According to~\cite{Yang2010}, Homotopy is one of the most competitive $\ell_1$-optimization algorithms in terms of accuracy, robustness, and convergence speed. For RPCA, we adopted the accelerated proximal gradient method with partial SVD~\cite{Lin2009} which has achieved a good balance between computation speed and reconstruction error. As mentioned above, RPCA cannot obtain the projection matrix for subspace learning. For fair comparison, we incorporated  Eigenfaces with RPCA (denoted by RPCA+PCA) and ROSL (denoted by ROSL+PCA) to obtain the low-dimensional features of the inputs. Unless otherwise specified, we assigned $m^{\prime}=300$ for all the tested methods except PCE which automatically determines the value of $m^{\prime}$. 

In our experiments, we evaluated the performance of these subspace learning algorithms with three classifiers, \emph{i.e.}, sparse representation based classification (SRC)~\cite{Wright2009,Gao2014}, support vector machine (SVM) with linear kernel~\cite{Fan2008}, and the nearest neighbor classifier (NN). For all the evaluated methods, we first identify their optimal parameters using a data partitions and then reported the mean and standard deviation of classification accuracy using 10 randomly sampling data partitions. 

We used eight image data sets including AR facial database~\cite{Martinez1998}, Expended Yale Database B (ExYaleB)~\cite{Georghiades2001}, four sessions of Multiple PIE (MPIE)~\cite{Gross2010}, COIL100 objects database~\cite{Nayar1996}, and the handwritten digital database USPS\footnote{http://archive.ics.uci.edu/ml/datasets.html}. 

The used AR data set contains 2600 samples from 50 male and 50 female subjects, of which 1400 samples are clean images, 600 samples are disguised by sunglasses, and the remaining 600 samples are disguised by scarves. ExYaleB contains 2414 frontal-face images of 38 subjects, and we use the first 58 samples of each subject. MPIE contains the facial images captured in four sessions. In the experiments, all the frontal faces with 14 illuminations\footnote{illuminations: 0,1,3,4,6,7,8,11,13,14,16,17,18,19.} are investigated. For computational efficiency, we downsized all the data sets from the original size to smaller one.~Table~\ref{tab2} provides an overview of the used data sets. 

\begin{table}
\caption{The used databases. $s$ and $n_i$ denote the number of subject and the number of images for each group. }
\label{tab2}
\begin{center}
\begin{small}
\begin{tabular}{lcclc}
\toprule
Databases &  $s$ & $n_i$ & Original Size & Cropped Size\\
\midrule
AR                &    100 & 26 & $165\times 120$ &  $55\times 40$ \\
ExYaleB       &   38 & 58 & $192\times 168$ &  $54\times 48$ \\
MPIE-S1      &    249 & 14 & $100\times 82$ &  $55\times 40$  \\
MPIE-S2      &    203 & 10 & $100\times 82$ &  $55\times 40$  \\
MPIE-S3     &     164 & 10 & $100\times 82$ &  $55\times 40$  \\
MPIE-S4     &     176 & 10 & $100\times 82$ &  $55\times 40$  \\
COIL100  & 100 & 10 & $128\times 128$ & $64\times 64$\\
USPS & 10 & 1100 & $16\times 16$ & -\\
\bottomrule
\end{tabular}
\end{small}
\end{center}
\end{table}

\subsection{The Influence of the Parameter}
\label{sec5.2}

\begin{figure}[!t]
\subfigure [1400 non-disguised  images from the AR database.]{\label{fig3a}\includegraphics[width=0.8\textwidth]{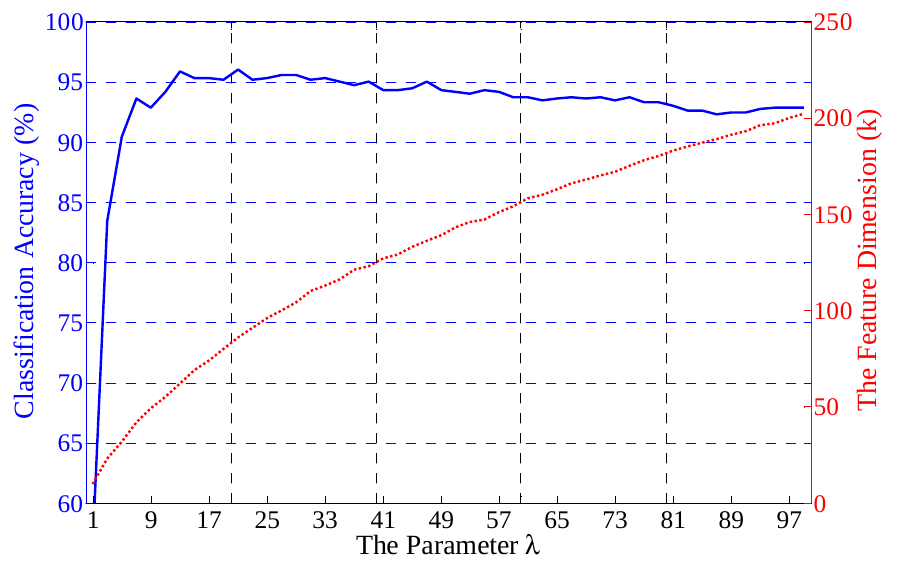}}\\
\subfigure [2204 images from the ExYaleB database.]{\label{fig3b}\includegraphics[width=0.8\textwidth]{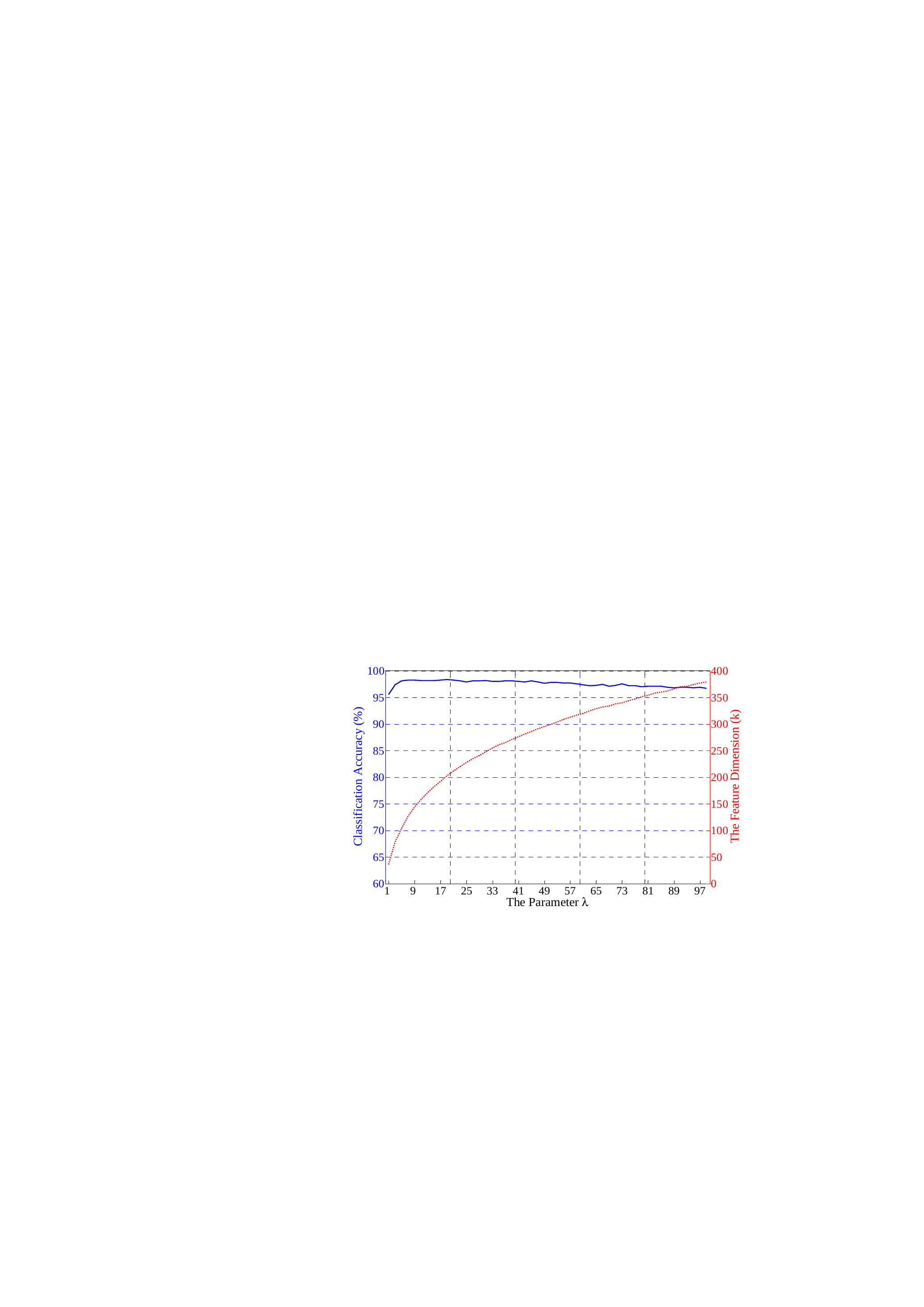}}
\caption{\label{fig3} The influence of the parameter $\lambda$, where the NN classifier is used. The solid and dotted lines denote the classification accuracy and the estimated feature dimension $m^{\prime}$ (\textit{i.e.}, $k$), respectively.}
\end{figure}

In this section, we investigate the influence of parameters of PCE. Besides the aforementioned subspace clustering methods, we also report the performance of CorrEntropy based Sparse Representation (CESR)~\cite{He2011:Maxi} as a baseline. Noticed that, CESR is a not subspace learning method, which performs like SRC to  classify each testing sample by finding which subject produces the minimal reconstruction error. By following the experimental setting in~\cite{He2011:Maxi}, we evaluated CESR using the non-negativity constraint with 0. 

\textbf{The influence of $\lambda$:} PCE uses the parameter $\lambda$ to measure the possible corruptions and estimate the feature dimension $m^{\prime}$. To investigate the influence of $\lambda$ on the classification accuracy and the estimated dimension, we increased the value of $\lambda$ from 1 to 99 with an interval of 2 by performing experiment on a subset of AR database and a subset of Extended Yale Database B. The used data sets include 1400 clean images over 100 individuals and 2204 samples over 38 subjects. In the experiment, we randomly divided each data set into two parts with equal size for training and testing.

\figurename~\ref{fig3} shows that a larger $\lambda$ will lead to a larger $m^{\prime}$ but does not necessarily bring a higher accuracy since the value of $\lambda$ does reflect the errors contained into inputs. For example, while $\lambda$ increases from $13$ to $39$, the recognition accuracy of PCE on AR almost remains unchanged, which ranges from 93.86\% to 95.29\%.

\begin{table*}[!t]
\caption{Performance comparison among different algorithms using  \textbf{ExYaleB}, where training data and testing data consist of 1520 and 684 samples, respectively. PCE, Eigenfaces, and NMF have only one parameter. PCE needs specifying the balanced parameter $\lambda$ but it automatically computes the feature dimension. All methods except PCE extract 300 features for classification. ``Para.'' indicates the tuned parameters. Note that, the second parameter of PCE denotes $m^{\prime}$ (\emph{i.e.}, $k$) which is automatically calculated via Theorem~\ref{thm2}.}
\label{tab3}
\centering
\begin{footnotesize}
\begin{tabular}{l llr| llr| llr}
\toprule
\multicolumn{1}{c}{Classifiers} & \multicolumn{3}{c|}{SRC} & \multicolumn{3}{c|}{SVM} & \multicolumn{3}{c}{NN} \\
\hline
\multicolumn{1}{c}{Algorithms} & Accuracy & Time (s) & Para. & Accuracy & Time (s) & Para. & Accuracy & Time (s) & Para.\\
\midrule
PCE & \textbf{96.90$\pm$0.74}  & 23.50$\pm$2.36  & 5, 118  & \textbf{98.93$\pm$0.18}  & 7.44$\pm$0.37  & 50, 329  & \textbf{97.03$\pm$0.57}  & 6.96$\pm$0.71  & 5, 118 \\
PCE2 & \textbf{96.92$\pm$0.59}  & 28.02$\pm$2.84  & 16.00  & 98.20$\pm$0.43  & 8.07$\pm$0.67  & 26.00  & 96.86$\pm$0.57  & 7.89$\pm$0.88  & 19.00 \\
Eigenfaces & 95.32$\pm$0.80  & 27.79$\pm$0.22  & - & 95.53$\pm$0.85  & 5.65$\pm$0.14  & - & 82.53$\pm$1.70  & 4.97$\pm$0.14  & -\\
LPP & 83.87$\pm$6.59  & 17.20$\pm$0.71  & 9.00  & 87.92$\pm$9.12  & 7.40$\pm$0.12  & 2.00  & 79.97$\pm$1.36  & 7.18$\pm$0.19  & 3.00 \\
NPE & 90.47$\pm$15.72  & 37.80$\pm$0.45  & 50.00  & 82.50$\pm$8.74  & 27.57$\pm$0.24  & 47.00  & 93.35$\pm$0.53  & 28.37$\pm$0.30  & 49.00 \\
L1-graph & 91.29$\pm$0.60  & 633.95$\pm$47.94  & 1e-2,1e-1 & 82.08$\pm$1.66  & 870.04$\pm$61.01  & 1e-3,1e-3 & 89.75$\pm$0.70  & 988.27$\pm$74.98  & 1e-2,1e-3\\
NMF & 87.54$\pm$1.15  & 137.46$\pm$6.26  & - & 91.59$\pm$1.09  & 19.39$\pm$0.36  & - & 72.11$\pm$1.44  & 11.13$\pm$0.03  & -\\
NeNMF & 87.09$\pm$1.11 & 72.10$\pm$6.37 & - & 76.73$\pm$2.14 & 10.38$\pm$0.66 & - & 47.63$\pm$1.03 & 6.89$\pm$0.02 & -\\
RPCA+PCA & 95.88$\pm$0.56  & 497.48$\pm$32.72  & 0.30  & 95.79$\pm$1.02  & 466.17$\pm$35.85  & 0.10  & 82.57$\pm$1.18  & 466.11$\pm$42.70  & 0.20 \\
ROLS+PCA & 95.73$\pm$0.77 & 765.95$\pm$15.95 & 0.23 & 95.03$\pm$0.86 & 733.46$\pm$18.09 & 0.19 & 81.33$\pm$1.65 & 732.61$\pm$15.58 & 0.35\\
\bottomrule
\end{tabular}
\end{footnotesize}
\end{table*}

\textbf{PCE with the fixed $m^{\prime}$:} To further show the effectiveness of our dimension determination method, we investigated the performance of PCE by manually specifying $m^{\prime}=300$, denoted by PCE2. We carried out the experiments on ExYaleB by choosing 40 samples from each subject as training data and using the rests for testing.~Table~\ref{tab3} reports the result from which we can find that:
\begin{itemize}
  \item the automatic version of our method, \emph{i.e.}, PCE, performs competitive to PCE2 which manually set $m^{\prime}=300$. This shows that our dimension estimation method can accurately estimate the feature dimension. 
  \item both PCE and PCE2 outperform the other methods by a considerable performance margin. For example, PCE is 3.68\% at least higher than the second best method when the NN classifier is used.
  \item Although PCE is not the fastest algorithm, it achieves a good balance between recognition rate and computational efficiency. In the experiments, PCE, Eigenfaces, LPP, NPE, and NeNMF are remarkably faster than other baseline methods. Moreover, NeNMF is remarkably faster than NMF while achieving a competitive performance. 
\end{itemize}

\textbf{Tuning $m^{\prime}$ for the baseline methods:} To show the dominance of the dimension estimation of PCE, we reported the performance of all the baseline methods in two settings, \textit{i.e.}, $m^{\prime}=300$ and the optimal $m^{\prime}$. The later setting is achieved by finding an optimal $m^{\prime}$ from 1 to 600 so that the algorithm achieves their highest classification accuracy. We carried out the experiments on ExYaleB by selecting 20 samples from each subject as training data and using the rests for testing. Note that, we only tuned $m^{\prime}$ for the baseline algorithms and PCE automatically identifies this parameter. Table~\ref{tab3b} shows that PCE remarkably outperforms the investigated methods in two settings even though all parameters including $m^{\prime}$ are tuned for achieving the best performance of the baselines. 

\begin{table}[!t]
\caption{Performance comparison among different algorithms using \textbf{ExYaleB)}. Besides $m^{\prime}=300$, all methods except PCE are with the tuned $m^{\prime}$.}
\label{tab3b}
\centering
\begin{footnotesize}
\begin{tabular}{l| ll| lll}
\toprule
\multicolumn{1}{c|}{\multirow{2}{*}{Methods}} & \multicolumn{2}{c|}{Fixed $m^{\prime}$ ($m^{\prime}=300$)} & \multicolumn{3}{c}{The Tuned $m^{\prime}$}  \\
\cline{2-6}
& Accuracy & Para. & Accuracy & Para. & $m^{\prime}$\\
\midrule
PCE & \textbf{93.51} & 263 & \textbf{94.63} & 95 & 162\\
Eigenfaces & 70.71&  & 76.11 & -  & 353  \\
LPP & 77.76 & 3.00& 76.73 & 3  & 312\\
NPE & 85.18 & 43.00& 87.54 & 65  & 405\\
L1-graph & 88.78 & 1e-2,1e-3& 89.18 & 1e-2,1e-2  & 532\\
NMF & 62.21 & -  &  70.15 & -  & 214\\
NeNMF & 51.42 & - & 69.88 & - & 148\\
RPCA+PCA & 70.86 & 0.10 & 76.25 & 0.1  & 375\\
ROLS+PCA & 70.46 & 0.35 & 89.18 & 0.39  & 322\\
CESR & 88.71 & 1e-3,1e-3 & 88.85 & 1e-3,1e-3 & 336\\
\bottomrule
\end{tabular}
\end{footnotesize}
\end{table}

\subsection{Performance with Increasing Training Data and Feature Dimension}

\begin{figure*}[!t]
\subfigure []{\label{fig4a}\includegraphics[width=0.39\textwidth]{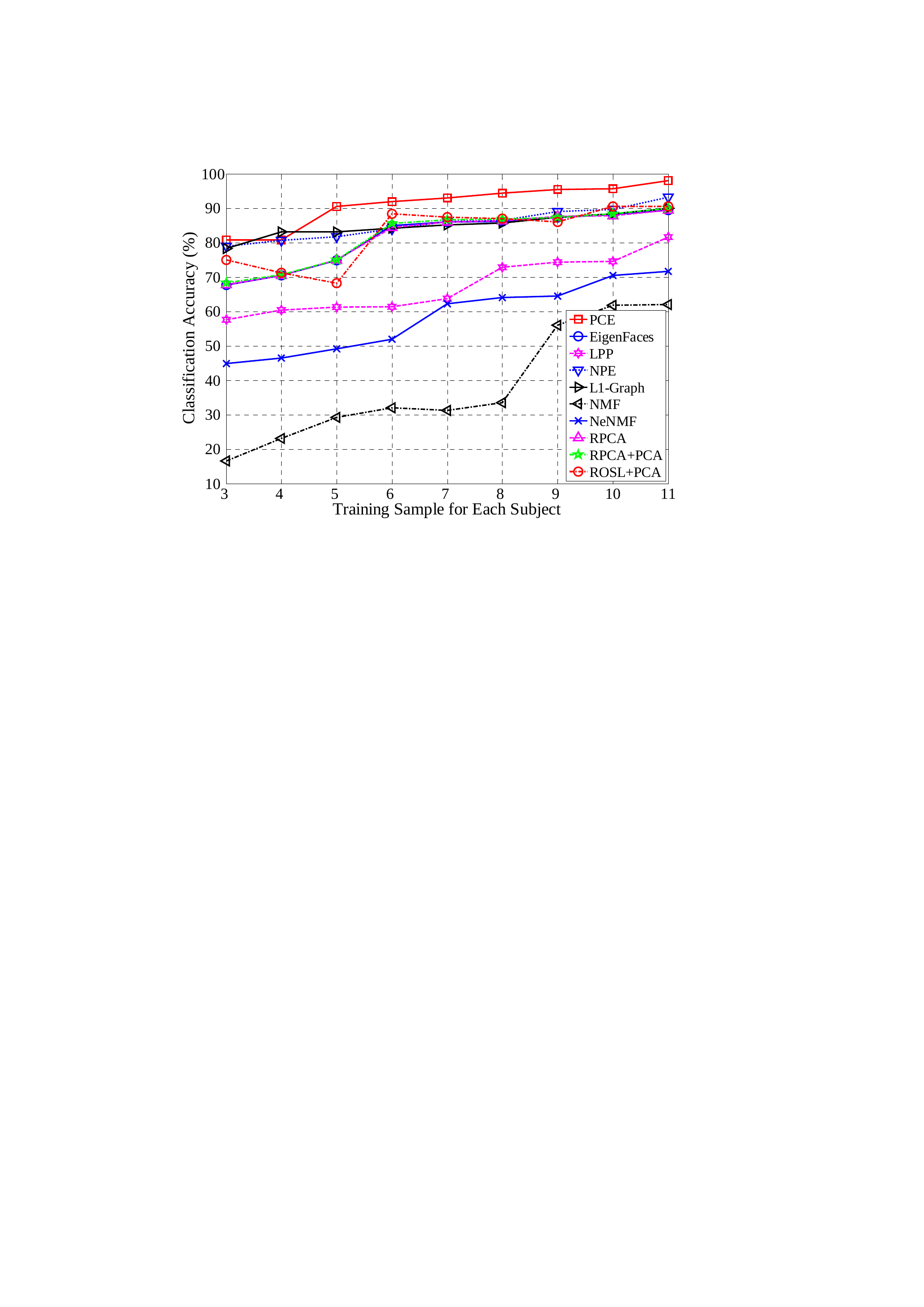}}\hspace{6mm}
\subfigure []{\label{fig4b}\includegraphics[width=0.39\textwidth]{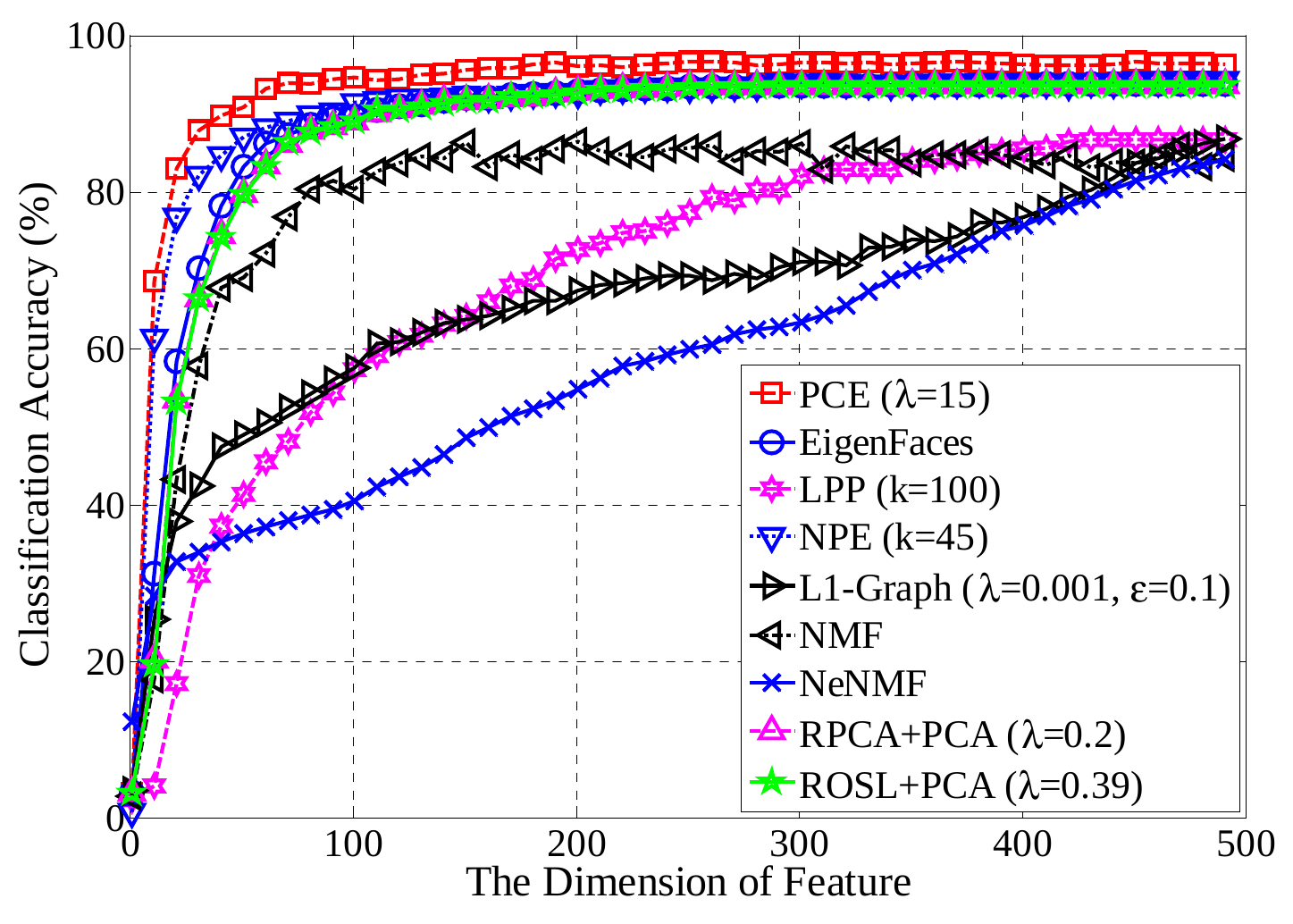}}
\caption{\label{fig4} (a) The performance of the evaluated subspace learning methods with the NN classifier on AR images.  (b) The recognition rates of the NN classifier with different subspace learning methods on ExYaleB. Note that, PCE does not automatically determine the feature dimension in the experiment of performance versus increasing feature dimension. }
\end{figure*}

\begin{table*}[!t]
\caption{Performance comparison among different algorithms using \textbf{the first session of MPIE (MPIE-S1)}. All methods except PCE extract 300 features for classification.}
\label{tab4}
\centering
\begin{footnotesize}
\begin{tabular}{l| llr| llr| llr}
\toprule
\multicolumn{1}{c|}{Classifiers} & \multicolumn{3}{c|}{SRC} & \multicolumn{3}{c|}{SVM} & \multicolumn{3}{c}{NN} \\
\hline
\multicolumn{1}{c|}{Algorithms} & Accuracy & Time (s) & Para. & Accuracy & Time (s) & Para. & Accuracy & Time (s) & Para.\\
\midrule
PCE &\textbf{99.27$\pm$0.32}  & 51.96$\pm$0.29  & 75.00  & \textbf{96.56$\pm$1.23}  & 14.38$\pm$0.52  & 85.00  & \textbf{97.72$\pm$0.55}  & 13.21$\pm$0.59  & 40.00 \\
Eigenfaces & 92.64$\pm$0.56  & 90.64$\pm$0.73  & - & 90.73$\pm$1.81  & 12.87$\pm$0.20  & - & 55.03$\pm$0.93  & 6.21$\pm$0.22  & -\\
LPP & 81.84$\pm$0.94  & 30.58$\pm$2.60  & 10.00  & 70.16$\pm$0.07  & 7.38$\pm$0.54  & 55.00  & 71.31$\pm$2.39  & 4.85$\pm$0.39  & 4.00\\ 
NPE & 80.56$\pm$0.41  & 58.95$\pm$0.77  & 29.00  & 80.25$\pm$0.15  & 36.38$\pm$0.55  & 43.00  & 77.71$\pm$1.65  & 36.19$\pm$0.38  & 49.00\\ 
L1-graph & 80.36$\pm$0.17  & 3856.69$\pm$280.16  & 1e-1,1e-1 & 86.79$\pm$1.62  & 5726.08$\pm$444.82  & 1e-6,1e-5 & 89.82$\pm$1.44  & 8185.55$\pm$503.80  & 1e-6,1e-4\\
NMF & 65.18$\pm$0.87  & 520.94$\pm$6.27  & - & 66.42$\pm$1.66  & 121.89$\pm$0.58  & - & 41.78$\pm$1.18  & 11.03$\pm$0.00  & -\\
NeNMF & 27.42$\pm$1.18  & 277.33$\pm$26.87  & - & 17.57$\pm$1.44  & 36.79$\pm$1.83  & - & 14.66$\pm$1.07  & 10.24$\pm$0.04  & -\\
RPCA+PCA & 92.68$\pm$0.57  & 1755.27$\pm$490.99  & 0.10  & 90.51$\pm$1.26  & 1497.13$\pm$329.00  & 0.30  & 54.95$\pm$1.38  & 1557.33$\pm$358.93  & 0.10 \\
ROLS+PCA & 92.39$\pm$0.91 & 658.29$\pm$39.47 & 0.19 & 89.9$\pm$1.71 & 578.53$\pm$38.61 & 0.35 & 54.82$\pm$1.22 & 593.07$\pm$60.41 & 0.27\\
\bottomrule
\end{tabular}
\end{footnotesize}
\end{table*}

\begin{table*}[!t]
\caption{Performance comparison among different algorithms using \textbf{the second session of MPIE (MPIE-S2)}. All methods except PCE extract 300 features for classification.}
\label{tab5}
\centering
\begin{footnotesize}
\begin{tabular}{l| llr| llr| llr}
\toprule
\multicolumn{1}{c|}{Classifiers} & \multicolumn{3}{c|}{SRC} & \multicolumn{3}{c|}{SVM} & \multicolumn{3}{c}{NN} \\
\hline
\multicolumn{1}{c|}{Algorithms} & Accuracy & Time (s) & Para. & Accuracy & Time (s) & Para. & Accuracy & Time (s) & Para.\\
\midrule
PCE & \textbf{93.87$\pm$0.82}  & 29.00$\pm$0.36  & 90.00  & \textbf{92.63$\pm$0.95}  & 4.97$\pm$0.11  & 40.00  & \textbf{93.18$\pm$0.87}  & 4.14$\pm$0.14  & 75.00 \\
Eigenfaces & 64.36$\pm$2.42  & 81.71$\pm$14.99  & - & 51.72$\pm$2.81  & 0.50$\pm$0.11  & - & 30.86$\pm$1.44  & 0.36$\pm$0.06  & -\\
LPP & 59.62$\pm$2.33  & 36.69$\pm$7.64  & 2.00  & 34.28$\pm$2.53  & 2.73$\pm$0.60  & 2.00  & 62.64$\pm$2.20  & 2.73$\pm$0.84  & 3.00 \\
NPE & 84.65$\pm$0.77  & 33.03$\pm$1.51  & 41.00  & 64.66$\pm$3.03  & 12.45$\pm$0.30  & 27.00  & 85.56$\pm$0.92  & 12.24$\pm$0.24  & 49.00 \\
L1-graph & 47.67$\pm$3.09  & 874.91$\pm$53.69  & 1e-3,1e-3 & 65.41$\pm$1.69  & 657.69$\pm$53.51  & 1e-3,1e-3 & 74.15$\pm$1.67  & 703.54$\pm$37.97  & 1e-2,1e-3\\
NMF & 81.88$\pm$1.31  & 323.93$\pm$8.70  & - & 83.19$\pm$1.47  & 46.72$\pm$1.22  & - & 57.21$\pm$1.38  & 26.01$\pm$0.01  & -\\
NeNMF & 33.38$\pm$1.46  & 128.33$\pm$8.43  & - & 19.65$\pm$1.08  & 26.79$\pm$1.39  & - & 11.94$\pm$0.55  & 14.58$\pm$0.06  & -\\
RPCA+PCA & 91.18$\pm$1.11  & 401.62$\pm$7.46  & 0.20  & 91.18$\pm$1.11  & 401.62$\pm$7.46  & 0.20  & 67.80$\pm$1.93  & 366.50$\pm$8.78  & 0.10 \\
ROLS+PCA & 91.07$\pm$0.91 & 875.79$\pm$74.21 & 0.27 & 83.02$\pm$2.2 & 633.85$\pm$42.38 & 0.43 & 43.7$\pm$0.98 & 761.98$\pm$67.65 & 0.23\\
\bottomrule
\end{tabular}
\end{footnotesize}
\end{table*}

\begin{table*}[!t]
\caption{Performance comparison among different algorithms using \textbf{the third session of MPIE (MPIE-S3)}. All methods except PCE extract 300 features for classification.}
\label{tab6}
\centering
\begin{footnotesize}
\begin{tabular}{l| llr| llr| llr}
\toprule
\multicolumn{1}{c|}{Classifiers} & \multicolumn{3}{c|}{SRC} & \multicolumn{3}{c|}{SVM} & \multicolumn{3}{c}{NN} \\
\hline
\multicolumn{1}{c|}{Algorithms} & Accuracy & Time (s) & Para. & Accuracy & Time (s) & Para. & Accuracy & Time (s) & Para.\\
\midrule
PCE & \textbf{97.79$\pm$0.81}  & 13.14$\pm$0.19  & 75.00  & \textbf{95.37$\pm$1.82}  & 2.74$\pm$0.08  & 65.00  & \textbf{94.04$\pm$0.84}  & 2.29$\pm$0.04  & 65.00 \\
Eigenfaces & 88.04$\pm$0.70 & 29.51$\pm$0.36 & - & 80.99$\pm$2.28 & 2.01$\pm$0.05 & - & 37.96$\pm$1.18 & 0.94$\pm$0.05 & -\\
LPP & 78.73$\pm$2.04 & 28.61$\pm$4.62 & 40.00 & 60.44$\pm$2.49 & 1.61$\pm$0.25 & 3.00 & 65.96$\pm$2.49 & 1.03$\pm$0.13 & 75.00 \\
NPE & 77.83$\pm$3.14 & 25.79$\pm$1.02 & 46.00 & 72.29$\pm$0.99 & 7.56$\pm$0.07 & 7.00 & 79.18$\pm$2.38 & 7.06$\pm$0.09 & 48.00 \\
L1-graph & 70.40$\pm$0.22 & 1315.37$\pm$192.65 & 1e-1,1e-5 & 79.28$\pm$2.54 & 1309.27$\pm$193.38 & 1e-3,1e-3 & 89.40$\pm$2.80 & 1539.26$\pm$226.57 & 1e-3,1e-3\\
NMF & 60.94$\pm$0.80 & 90.64$\pm$0.91 & - & 51.34$\pm$1.68 & 40.04$\pm$0.37 & - & 39.89$\pm$1.04 & 4.28$\pm$0.01 & -\\
NeNMF & 39.90$\pm$1.19 & 61.24$\pm$3.21 & - & 26.66$\pm$1.79 & 20.30$\pm$0.41 & - & 11.93$\pm$0.95 & 3.11$\pm$0.01 & -\\
RPCA+PCA & 88.49$\pm$2.17 & 630.08$\pm$88.89 & 0.10 & 81.02$\pm$2.52 & 491.36$\pm$26.75 & 0.30 & 37.85$\pm$0.83 & 481.87$\pm$25.01 & 0.30 \\
ROLS+PCA & 87.13$\pm$1.53 & 327.14$\pm$30.97 & 0.15 & 78.29$\pm$3.09 & 291.57$\pm$1.64 & 0.47 & 37.23$\pm$1.24 & 297.75$\pm$24.91 & 0.35\\
\bottomrule
\end{tabular}
\end{footnotesize}
\end{table*}

\begin{table*}[!t]
\caption{Performance comparison among different algorithms using \textbf{the fourth session of MPIE (MPIE-S4)}. All methods except PCE extract 300 features for classification.}
\label{tab7}
\centering
\begin{footnotesize}
\begin{tabular}{l| llr| llr| llr}
\toprule
\multicolumn{1}{c|}{Classifiers} & \multicolumn{3}{c|}{SRC} & \multicolumn{3}{c|}{SVM} & \multicolumn{3}{c}{NN} \\
\hline
\multicolumn{1}{c|}{Algorithms} & Accuracy & Time (s) & Para. & Accuracy & Time (s) & Para. & Accuracy & Time (s) & Para.\\
\midrule
PCE & \textbf{98.36$\pm$0.41} & 14.07$\pm$0.31 & 100.00 & \textbf{90.55$\pm$1.02} & 3.04$\pm$0.12 & 45.00 & \textbf{97.34$\pm$0.78} & 2.73$\pm$0.09 & 80.00 \\
Eigenfaces & 92.05$\pm$1.37 & 32.43$\pm$0.32 & - & 82.18$\pm$3.88 & 2.34$\pm$0.05 & - & 43.74$\pm$1.17 & 1.12$\pm$0.05 & -\\
LPP & 64.67$\pm$2.52 & 27.38$\pm$1.57 & 3.00 & 61.47$\pm$1.12 & 1.94$\pm$0.20 & 2.00 & 73.69$\pm$2.68 & 1.11$\pm$0.17 & 2.00 \\
NPE & 84.74$\pm$1.50 & 30.45$\pm$1.28 & 46.00 & 63.80$\pm$1.56 & 9.87$\pm$0.49 & 49.00 & 87.30$\pm$1.10 & 8.54$\pm$0.36 & 45.00 \\
L1-graph & 70.45$\pm$0.31 & 1928.24$\pm$212.21 & 1e-3,1e-3 & 84.67$\pm$2.46 & 1825.09$\pm$197.62 & 1e-3,1e-3 & 93.56$\pm$1.13 & 1767.57$\pm$156.61 & 1e-3,1e-3\\
NMF & 69.41$\pm$1.73 & 98.91$\pm$1.37 & - & 53.48$\pm$2.07 & 47.26$\pm$0.44 & - & 25.47$\pm$1.40 & 4.85$\pm$0.00 & -\\
NeNMF & 40.61$\pm$1.21 & 58.45$\pm$3.76 & - & 23.78$\pm$1.80 & 20.87$\pm$1.17 & - & 14.83$\pm$0.61 & 3.36$\pm$0.02 & -\\
RPCA+PCA & 93.16$\pm$1.17 & 682.27$\pm$39.20 & 0.30 & 84.45$\pm$3.02 & 535.31$\pm$19.08 & 0.10 & 43.66$\pm$0.63 & 514.51$\pm$20.82 & 0.10 \\
ROLS+PCA & 91.8$\pm$1.01 & 200.83$\pm$25.47 & 0.07 & 82.61$\pm$2.12 & 265.63$\pm$7.03 & 0.23 & 43.01$\pm$1.58 & 264.21$\pm$6.46 & 0.27\\
\bottomrule
\end{tabular}
\end{footnotesize}
\end{table*}

\begin{table*}[!t]
\caption{Performance comparison among different algorithms using \textbf{COIL100}. All methods except PCE extract 300 features for classification.}
\label{tab8}
\centering
\begin{footnotesize}
\begin{tabular}{l| llr| llr| llr}
\toprule
\multicolumn{1}{c|}{Classifiers} & \multicolumn{3}{c|}{SRC} & \multicolumn{3}{c|}{SVM} & \multicolumn{3}{c}{NN} \\
\hline
\multicolumn{1}{c|}{Algorithms} & Accuracy & Time (s) & Para. & Accuracy & Time (s) & Para. & Accuracy & Time (s) & Para.\\
\midrule
PCE & \textbf{59.60$\pm$1.94}  & 12.25$\pm$0.32  & 15.00  & \textbf{53.00$\pm$1.22}  & 1.36$\pm$0.01  & 45.00  & \textbf{57.40$\pm$1.83}  & 1.15$\pm$0.03  & 5.00\\ 
Eigenfaces & 57.40$\pm$1.67  & 12.97$\pm$0.25  & - & 44.40$\pm$2.21  & 1.04$\pm$0.06  & - & 54.76$\pm$1.14  & 0.67$\pm$0.06  & -\\
LPP & 45.86$\pm$1.51  & 13.22$\pm$0.54  & 60.00  & 30.20$\pm$3.08  & 0.80$\pm$0.11  & 2.00  & 41.10$\pm$2.15  & 0.63$\pm$0.02  & 90.00 \\
NPE & 47.72$\pm$2.25  & 15.30$\pm$0.28  & 43.00  & 32.78$\pm$2.90  & 5.33$\pm$0.08  & 36.00  & 44.88$\pm$2.12  & 6.81$\pm$0.03  & 49.00 \\
L1-graph & 45.16$\pm$1.83  & 960.80$\pm$123.43  & 1e-2,1e-4 & 39.42$\pm$2.81  & 801.73$\pm$147.83  & 1e-3,1e-3 & 38.06$\pm$1.96  & 664.92$\pm$93.75  & 1e-1,1e-5\\
NMF & 51.42$\pm$2.17  & 76.05$\pm$1.21  & - & 41.74$\pm$2.05  & 32.81$\pm$0.18  & - & 56.82$\pm$1.46  & 6.47$\pm$0.00  & -\\
NeNMF & 57.48$\pm$2.13  & 39.21$\pm$3.18  & - & 35.96$\pm$3.73  & 25.64$\pm$0.52  & - & 59.02$\pm$1.55  & 10.99$\pm$0.01  & -\\
RPCA+PCA & 58.04$\pm$0.90  & 244.92$\pm$50.17  & 0.30  & 45.52$\pm$2.70  & 229.54$\pm$51.06  & 0.20  & 56.48$\pm$1.32  & 227.27$\pm$52.66  & 0.10 \\
ROLS+PCA & 58.11$\pm$1.67 & 447.05$\pm$27.01 & 0.03 & 44.74$\pm$1.69 & 747.66$\pm$98.89 & 0.19 & 57.10$\pm$1.68 & 379.81$\pm$18.42 & 0.03\\
\bottomrule
\end{tabular}
\end{footnotesize}
\end{table*}


In this section, we examined the performance of PCE with increasing training samples and increasing feature dimension. In the first test, we randomly sampled $n_i$ clean AR images from each subject for training and used the rest for testing. Besides the result of RPCA+PCA, we also reported the performance of RPCA without dimension reduction. 

In the second test, we randomly chose a half of images from ExYaleB for training and used the rest for testing. We reported the recognition rate of the NN classifier with the first $m^{\prime}$ features extracted by all the tested subspace learning methods, where $m^{\prime}$ increases from $1$ to $600$ with an interval of 10. From~\figurename~\ref{fig4}, we can conclude: 
\begin{itemize}
  \item PCE performs well even though only a few of training samples are available. Its accuracy is about $90\%$ when $n_i=5$, whereas the second best method achieves the same accuracy when $n_i=9$.
  \item RPCA and RPCA+PCA perform very close, however, RPCA+PCA is more efficient than RPCA.
  \item \figurename~\ref{fig4b} shows that PCE consistently outperforms the other methods. This benefits an advantage of PCE, \emph{i.e.}, PCE obtains a more compact representation which can use a few of variables to represent the entire data. 
\end{itemize}

\subsection{Subspace Learning on Clean Images}

In this section, we performed the experiments using MPIE  and COIL100. For each data set, we split it into two parts with equal size. As did in the above experiments, we set $m^{\prime}=300$ for all the tested methods except PCE. Tables~\ref{tab4}--\ref{tab8} report the results, from which one can find that:

\begin{itemize}
  \item with three classifiers, PCE outperforms the other investigated approaches on these five data sets by a considerable performance margin. For example, the recognition rates of PCE with these three classifiers are 6.59\%, 5.83\%, and 7.90\% at least higher than the rates of the second best subspace learning method on MPIE-S1.
  \item PCE is more stable than other tested methods. Although SRC generally outperforms SVM and NN with the same feature, such superiority is not distinct for PCE. For example, SRC gives an accuracy improvement of 1.02\% over NN to PCE on MPIE-S4. However, the corresponding improvement to RPCA+PCA is about 49.50\%.
  \item PCE achieves the best results in all the tests, while using the least time to perform dimension reduction and classification. PCE, Eigenfaces, LPP, NPE, and NeNMF are remarkably efficient than L1-graph, NMF, RPCA+PCA, and ROSL+PCA.
\end{itemize}

\subsection{Subspace Learning on Corrupted Facial Images}

In this section, we investigated the robustness of PCE against two corruptions using ExYaleB and the NN classifier. The corruptions include the white Gaussian noise (additive noise) and the random pixel corruption (non-additive noise)~\cite{Wright2009}. 

In our experiments, we use a half of images (29 images per subject) to corrupt using these two noises. Specifically, we added white Gaussian noise into the sampled data $\mathbf{d}$ via $\mathbf{\tilde{d}} = \mathbf{d}+\rho \mathbf{n}$, where $\mathbf{\tilde{d}}\in[0\ 255]$, $\rho$ is the corruption ratio, and $\mathbf{n}$ is the noise following the standard normal distribution. For random pixel corruption, we replaced the value of a percentage of pixels randomly selected from the image with the values following a uniform distribution over $[0,\ p_{max}]$, where $p_{max}$ is the largest pixel value of $\mathbf{d}$. After adding the noises into the images, we randomly divide the data into training and testing sets. In other words, both training data  and testing data probably contains corruptions.~\figurename~\ref{fig5} illustrates some results achieved by our method. We can see that PCE successfully identifies the noises from the corrupted samples and recovers the clean data.~Table~\ref{tab9} reports the comparison from which we can see that: 
\begin{itemize}
  \item PCE is more robust than the other tested approaches. When $10\%$ pixels are  randomly corrupted, the accuracy of PCE is at least $9.46\%$ higher than that of the other methods.
  \item with the increase of level of noise, the dominance of PCE is further strengthen. For example, the improvement in accuracy of PCE increases from $9.46\%$ to $23.23\%$ when $\rho$ increases to 30\%.
\end{itemize}

\begin{figure}[!t]
\subfigure [corruption ratio: 10\%]{\label{fig5a}\includegraphics[width=0.46\textwidth]{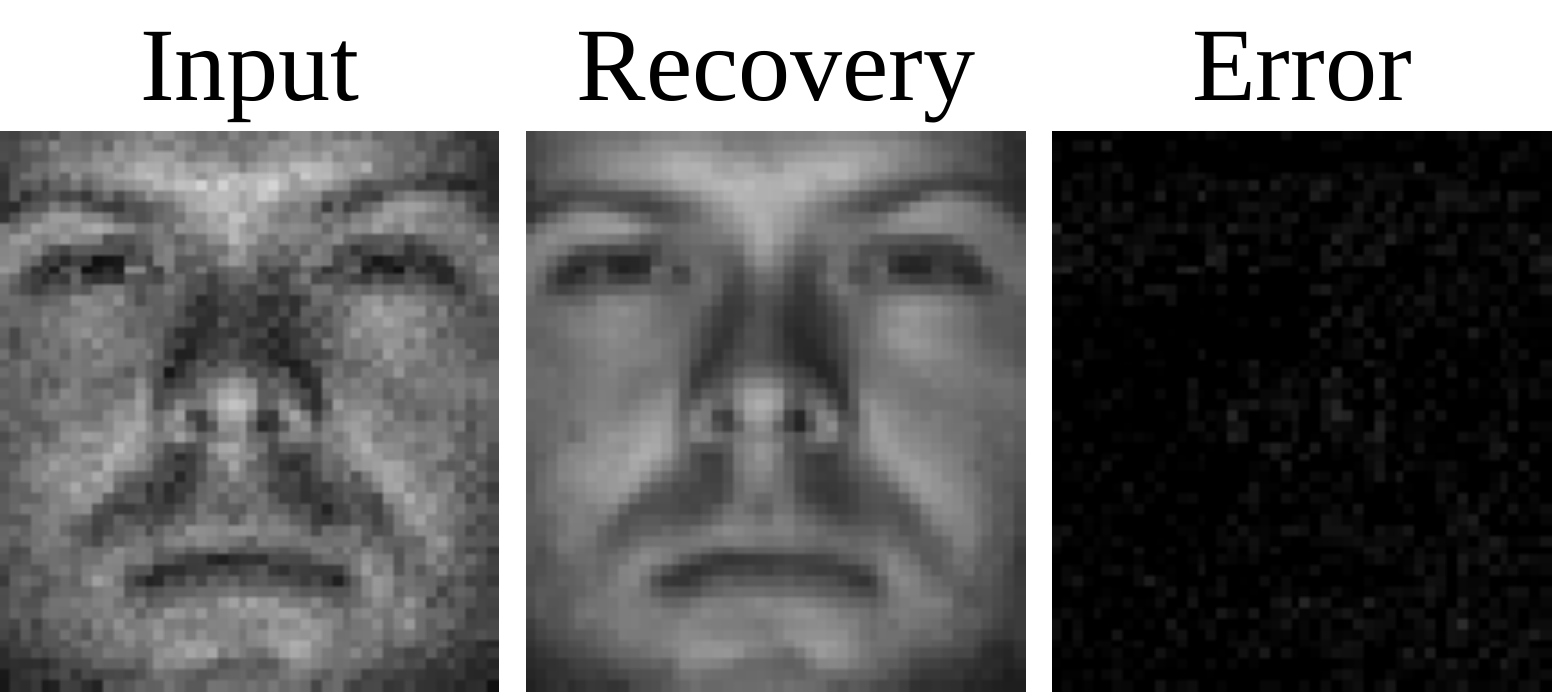}}\hspace{3mm}
\subfigure [corruption ratio: 30\%]{\label{fig5b}\includegraphics[width=0.46\textwidth]{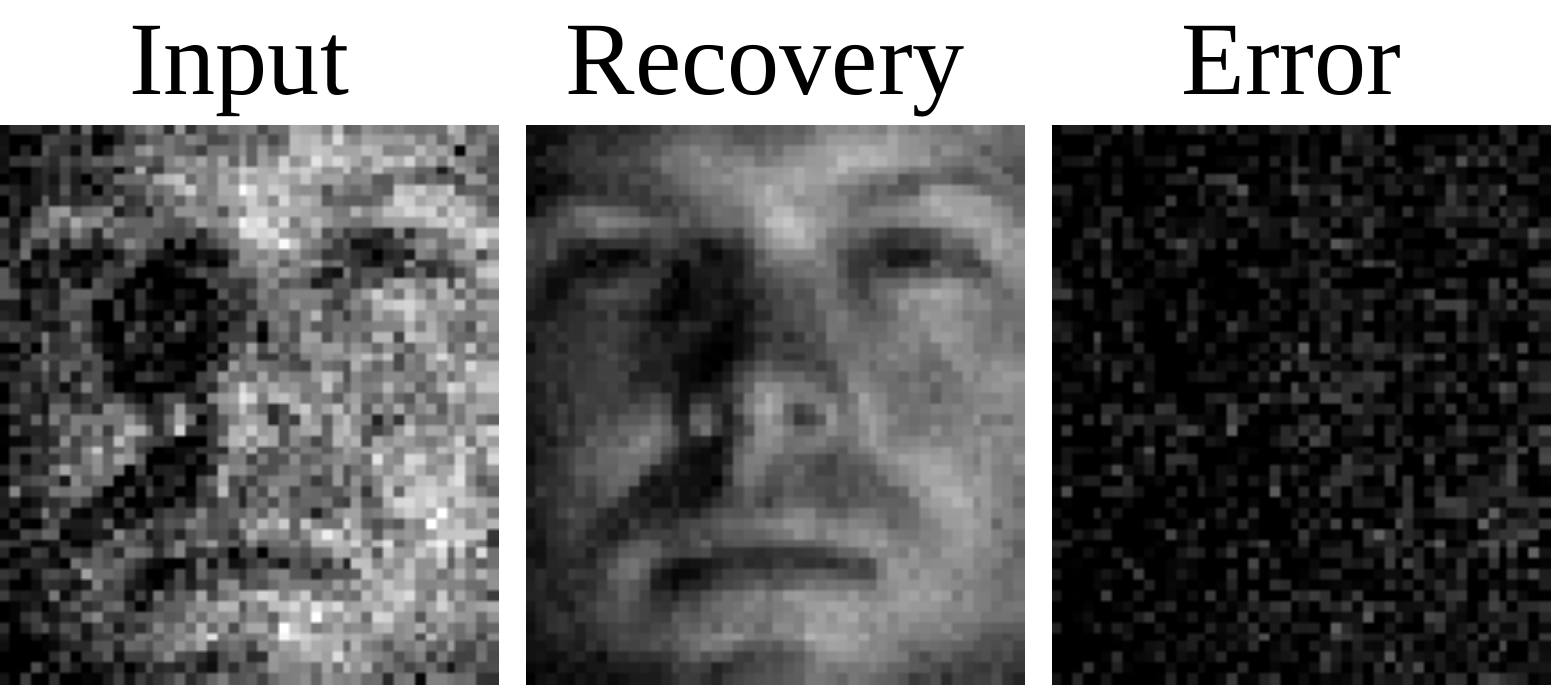}}
\caption{\label{fig5} Some results achieved by PCE over the corrupted ExYaleB data set which is corrupted by the Gaussian noise. The recovery and the error are identified by PCE according to Theorem 1.}
\end{figure}

\begin{table*}[!t]
\caption{Performance of different subspace learning algorithms with the NN classifier using \textbf{the corrupted ExYaleB}. All methods except PCE extract 300 features for classification. RPC is the short for Random Pixel Corruption. The number in the parentheses denotes the level of corruption.}
\label{tab9}
\centering
\begin{footnotesize}
\begin{tabular}{l| lr| lr| lr| lr}
\toprule
\multicolumn{1}{c|}{Corruptions} & \multicolumn{2}{c|}{Gaussian (10\%)} & \multicolumn{2}{c|}{Gaussian (30\%)} & \multicolumn{2}{c|}{RPC (10\%)} & \multicolumn{2}{c}{RPC (30\%)}\\
\hline
\multicolumn{1}{c|}{Algorithms} & Accuracy & Para. & Accuracy & Para.& Accuracy & Para.& Accuracy & Para.\\
\midrule
PCE & \textbf{95.05$\pm$0.63}  & 10.00  & \textbf{93.18$\pm$0.87}  & 5.00  & \textbf{90.12$\pm$0.98}  & 5.00  & \textbf{83.48$\pm$1.04}  & 10.00 \\
Eigenfaces & 41.69$\pm$2.01  & - & 30.86$\pm$1.44  & - & 30.35$\pm$2.05  & - & 25.37$\pm$1.56  & -\\
LPP & 76.94$\pm$0.75  & 2.00  & 62.64$\pm$2.20  & 3.00  & 55.86$\pm$1.27  & 2.00  & 42.76$\pm$1.53  & 2.00 \\
NPE & 91.54$\pm$0.76  & 49.00  & 85.56$\pm$0.92  & 49.00  & 80.66$\pm$0.86  & 49.00  & 60.25$\pm$1.64  & 43.00 \\
L1-graph & 87.36$\pm$0.81  & 1e-3,1e-4 & 74.15$\pm$1.67  & 1e-2,1e-3 & 71.63$\pm$0.90  & 1e-3,1e-4 & 55.02$\pm$2.07  & 1e-4,1e-4\\
NMF & 67.42$\pm$1.41  & - & 57.21$\pm$1.38  & - & 60.57$\pm$1.88  & - & 46.13$\pm$1.41  & -\\
NeNMF & 44.75$\pm$1.28  & - & 42.48$\pm$0.68  & - & 43.27$\pm$1.01  & - & 25.44$\pm$1.21  & -\\
RPCA+PCA & 76.26$\pm$1.12  & 0.20  & 67.80$\pm$1.93  & 0.10  & 64.56$\pm$0.67  & 0.10  & 52.12$\pm$1.34  & 0.10 \\
ROSL+PCA & 76.52$\pm$0.83 & 0.27 & 66.50$\pm$1.49  & 0.27 & 75.94$\pm$1.44 & 0.07 & 65.76$\pm$0.98 &  0.07 \\
\bottomrule
\end{tabular}
\end{footnotesize}
\end{table*}

\subsection{Subspace Learning on Disguised Facial Images}

Besides the above tests on the robustness to corruptions, we also investigated the robustness to real disguises.~Tables~\ref{tab10}--\ref{tab11} reports results on two subsets of AR database. The first subset contains 600 clean images and 600 images disguised with sunglasses (occlusion rate is about $20\%$), and the second one includes 600 clean images and 600 images disguised by scarves (occlusion rate is about $40\%$). Like the above experiment, both training data and testing data will contains the disguised images. From the results, one can conclude that:
\begin{itemize}
  \item PCE significantly outperforms the other tested methods. When the images are disguised by sunglasses, the recognition rates of PCE with SRC, SVM, and NN are $5.88\%$, $23.03\%$, and $11.75\%$ higher than the best baseline method. With respect to the images with scarves, the corresponding improvements are $12.17\%$, $21.30\%$, and $17.64\%$.
  \item PCE is one of the most computationally efficient methods. When SRC is used, PCE is 2.27 times faster than NPE and 497.16 times faster than L1-graph on the faces with sunglasses. When the faces are disguised by scarves, the corresponding speedup are 2.17 and 484.94 times, respectively.
\end{itemize}

\begin{table*}[!t]
\caption{Performance comparison among different algorithms using \textbf{the AR images disguised by sunglasses}. All methods except PCE extract 300 features for classification.}
\label{tab10}
\centering
\begin{footnotesize}
\begin{tabular}{l| llr| llr| llr}
\toprule
\multicolumn{1}{c|}{Classifiers} & \multicolumn{3}{c|}{SRC} & \multicolumn{3}{c|}{SVM} & \multicolumn{3}{c}{NN} \\
\hline
\multicolumn{1}{c|}{Algorithms} & Accuracy & Time (s) & Para. & Accuracy & Time (s) & Para. & Accuracy & Time (s) & Para.\\
\midrule
PCE & \textbf{83.88$\pm$1.38}  & 8.73$\pm$0.90  & 65.00  & \textbf{87.80$\pm$1.57}  & 0.90$\pm$0.10  & 65.00  & \textbf{68.58$\pm$1.96}  & 0.71$\pm$0.11  & 55.00 \\
Eigenfaces & 72.87$\pm$1.99  & 45.48$\pm$5.18  & - & 64.77$\pm$2.96  & 1.62$\pm$0.40  & - & 36.42$\pm$1.69  & 0.78$\pm$0.19  & -\\
LPP & 51.73$\pm$2.77  & 44.20$\pm$8.25  & 95.00  & 44.88$\pm$1.93  & 1.60$\pm$0.57  & 2.00  & 37.37$\pm$2.19  & 1.12$\pm$0.30  & 85.00 \\
NPE & 78.00$\pm$2.27  & 19.84$\pm$0.51  & 47.00  & 49.17$\pm$3.33  & 4.30$\pm$0.04  & 47.00  & 56.83$\pm$1.83  & 4.16$\pm$0.04  & 49.00 \\
L1-graph & 52.00$\pm$1.42  & 4340.22$\pm$573.64  & 1e-4,1e-4 & 48.53$\pm$2.06  & 3899.81$\pm$487.89  & 1e-4,1e-4 & 49.28$\pm$2.68  & 4189.73$\pm$431.98  & 1e-4,1e-4\\
NMF & 47.87$\pm$2.64  & 108.46$\pm$2.98  & - & 43.05$\pm$2.39  & 24.34$\pm$0.81  & - & 31.35$\pm$2.04  & 8.01$\pm$0.01  & -\\
NeNMF & 37.57$\pm$2.41  & 55.36$\pm$4.31  & - & 26.17$\pm$2.30  & 15.14$\pm$0.34  & - & 26.52$\pm$1.43  & 7.62$\pm$0.01  & -\\
RPCA+PCA & 72.07$\pm$2.30  & 1227.08$\pm$519.27  & 0.10  & 63.70$\pm$3.74  & 1044.46$\pm$462.33  & 0.20  & 36.93$\pm$0.90  & 965.76$\pm$385.19  & 0.10 \\
ROLS+PCA & 71.42$\pm$1.46  & 1313.65$\pm$501.05  & 0.31  & 62.67$\pm$3.27  & 1336.00$\pm$549.66  & 0.43  & 35.58$\pm$2.14  & 1245.63$\pm$406.74  & 0.19 \\
\bottomrule
\end{tabular}
\end{footnotesize}
\end{table*}

\begin{table*}[!t]
\caption{Performance comparison among different algorithms using \textbf{the AR images disguised by scarves}. All methods except   PCE extract 300 features for classification.}
\label{tab11}
\centering
\begin{footnotesize}
\begin{tabular}{l| llr| llr| llr}
\toprule
\multicolumn{1}{c|}{Classifiers} & \multicolumn{3}{c|}{SRC} & \multicolumn{3}{c|}{SVM} & \multicolumn{3}{c}{NN} \\
\hline
\multicolumn{1}{c|}{Algorithms} & Accuracy & Time (s) & Para. & Accuracy & Time (s) & Para. & Accuracy & Time (s) & Para.\\
\midrule
PCE & \textbf{83.57$\pm$1.16}  & 8.95$\pm$0.96  & 50.00  & \textbf{87.70$\pm$1.62}  & 0.90$\pm$0.11  & 95.00  & \textbf{66.92$\pm$1.75}  & 0.69$\pm$0.10  & 50.00 \\
Eigenfaces & 69.32$\pm$2.58  & 36.18$\pm$3.97  & - & 63.93$\pm$2.84  & 1.42$\pm$0.34  & - & 30.58$\pm$1.27  & 0.71$\pm$0.19  & -\\
LPP & 49.48$\pm$1.79  & 30.80$\pm$6.29  & 2.00  & 43.07$\pm$1.80  & 1.34$\pm$0.39  & 2.00  & 33.70$\pm$1.70  & 0.85$\pm$0.21  & 90.00 \\
NPE & 62.75$\pm$2.16  & 19.44$\pm$0.62  & 47.00  & 58.23$\pm$2.75  & 4.40$\pm$0.03  & 49.00  & 54.33$\pm$2.37  & 4.09$\pm$0.03  & 49.00 \\
L1-graph & 49.65$\pm$1.42  & 4340.22$\pm$453.64  & 1e-3,1e-4 & 48.53$\pm$2.06  & 5381.96$\pm$467.89  & 1e-4,1e-4 & 49.28$\pm$2.68  & 5189.73$\pm$411.98  & 1e-4,1e-4\\
NMF & 47.17$\pm$2.18  & 109.33$\pm$2.22  & - & 40.55$\pm$2.20  & 23.67$\pm$0.92  & - & 24.58$\pm$1.88  & 5.82$\pm$0.01  & -\\
NeNMF & 32.58$\pm$1.76  & 52.20$\pm$1.81  & - & 21.98$\pm$2.10  & 14.71$\pm$0.08  & - & 18.52$\pm$1.46  & 3.49$\pm$0.00  & -\\
RPCA+PCA & 71.40$\pm$2.67  & 241.45$\pm$4.39  & 0.10  & 66.40$\pm$2.62  & 193.67$\pm$7.76  & 0.10  & 32.27$\pm$1.46  & 195.21$\pm$8.01  & 0.20 \\
ROLS+PCA & 68.57$\pm$1.37  & 557.54$\pm$92.71  & 0.15  & 60.38$\pm$2.53  & 441.79$\pm$95.61  & 0.23  & 31.03$\pm$1.05  & 432.02$\pm$95.28  & 0.07 \\
\bottomrule
\end{tabular}
\end{footnotesize}
\end{table*}

\subsection{Comparisons with Some Dimension Estimation Techniques}

\begin{table}[!t]
\caption{Performance of different dimension estimators with the NN classifier, where $m^{\prime}$ denotes the estimated feature dimension and only the time cost (second) for dimension estimation is taken into consideration.}
\label{tab12}
\centering
\begin{footnotesize}
\begin{tabular}{l| rrrr}
\toprule
Methods & Accuracy & Time Cost & $m^{\prime}$ & Para.\\
\midrule
PCE                   & \textbf{90.31} & 0.57  & 109 & 30	\\
MLE+PCA          & 68.29 & 3.34  & 11.6 & 10	\\
MiND-ML+PCA  & 67.14 & 3.95  & 11 & 10	\\
MiND-KL+PCA  & 72.71 & 2791.19  & 16 & 22	\\
DANCo+PCA    & 71.71 & 28804.01  & 15 & 22	\\
\bottomrule
\end{tabular}
\end{footnotesize}
\end{table}

In this section, we compare PCE with three dimension estimators, \textit{i.e.}, maximum likelihood estimation (MLE)~\cite{Levina2004:Maximum}, minimum neighbor distance Estimators (MiND)~\cite{Lombardi2011:Mini}, and DANCo~\cite{Ceruti2014:DANCo}.  MiND has two variants which are denoted as MiND-ML and MiND-KL. All these estimators need specifying the size of neighborhood of which the optimal value is found from the range of [10 30] with an interval of 2. Since these estimators cannot be used for dimension reduction, we report the performance of these estimators with PCA, \textit{i.e.}, we first estimate the feature dimension with an estimator and then extract features using PCA with the estimated dimension. We carry out experiments with the NN classifier on a subset of the AR data set of which both the training and testing set include 700 non-disguised facial images. Table~\ref{tab12} shows that our approach outperforms the baseline estimators by a considerable performance margin in terms of classification accuracy and time cost.

\subsection{Scalability Evaluation}

In this section, we investigate the scalability performance of PCE by using the whole USPS data set, where $\lambda$ of PCE is fixed as $0.05$. In the experiments, we randomly split the whole data set into two partitions for training and testing, where the number of training samples increases from 500 to 9,500 with an interval of 500 and thus 19 partitions are obtained. \figurename~\ref{fig6} reports the classification accuracy and the time cost taken by PCE. From the results, we could see that the recognition rate of PCE almost remains unchanged when 1,500 samples are available for training. Considering different classifiers, SRC slightly performs better than NN, and both of them remarkably outperform SVM. PCE is computational efficient, it only take about seven seconds to handle 9,500 samples. Moreover, PCE could be further speeded up by adopting large scale SVD methods. However, this has been out of scope for this paper. 

\begin{figure*}[!t]
\subfigure [Classification Accuracy]{\label{fig6a}\includegraphics[width=0.41\textwidth]{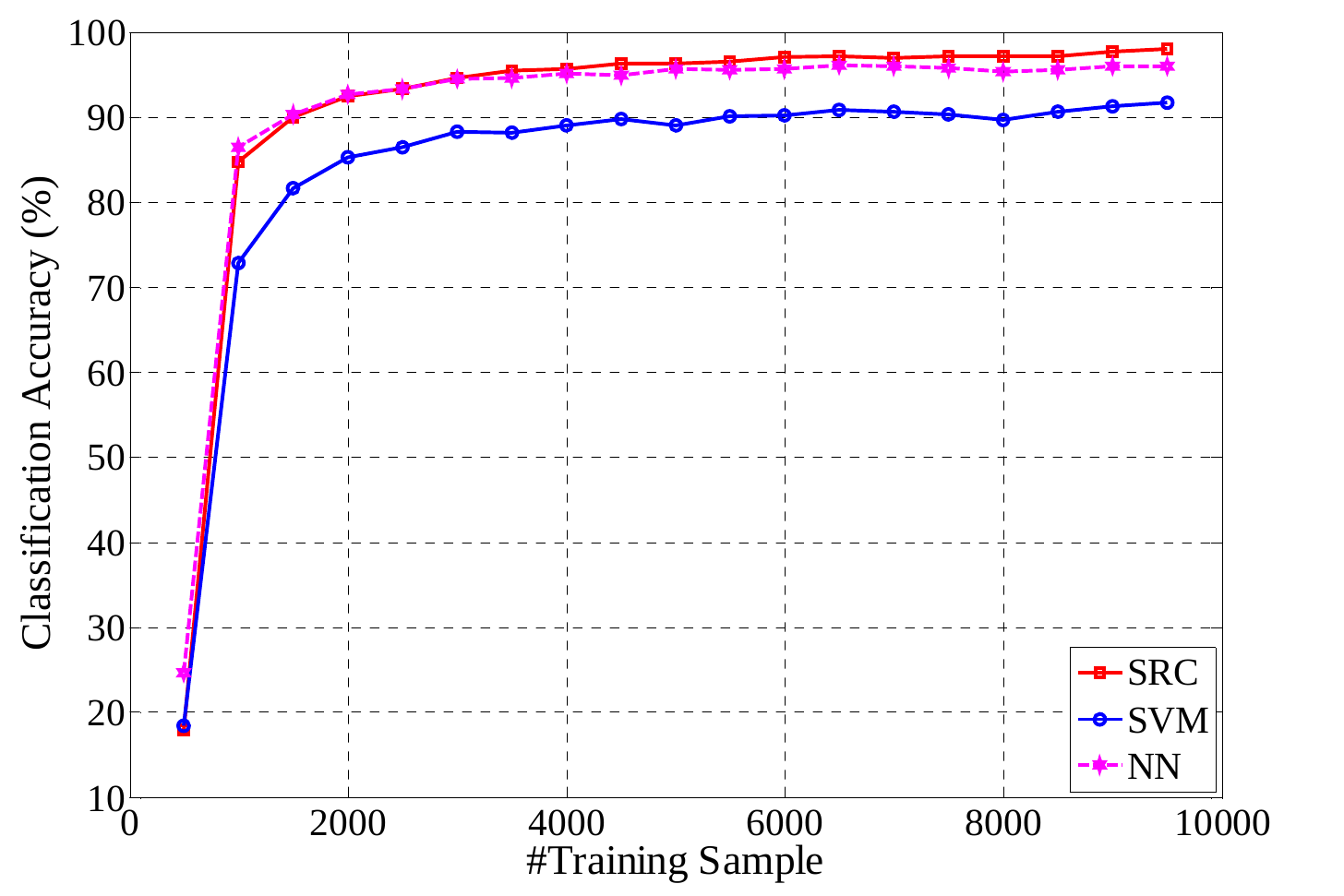}}\hspace{6mm}
\subfigure [Time Cost (Seconds)]{\label{fig6b}\includegraphics[width=0.40\textwidth]{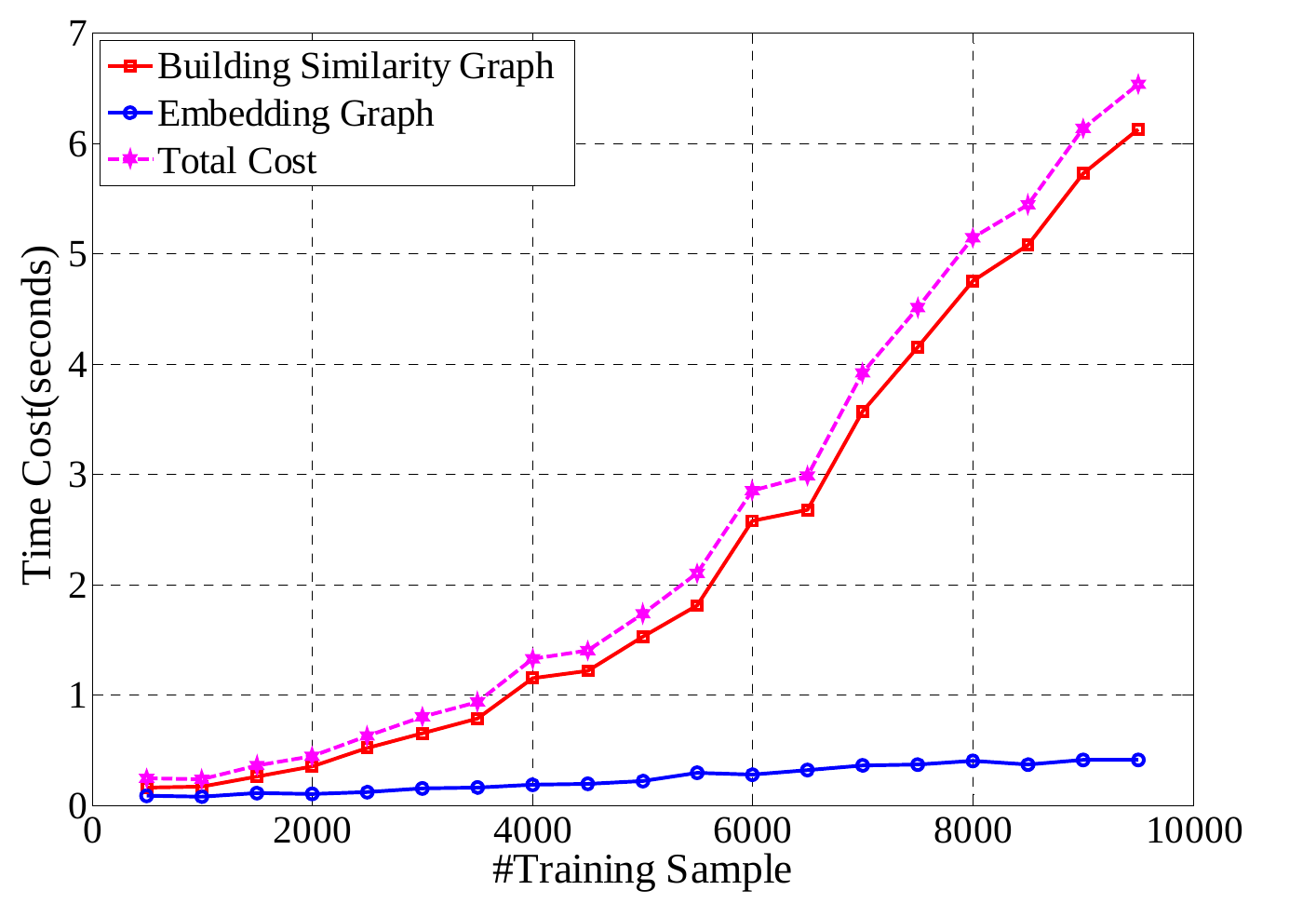}}
\caption{\label{fig6} Scalability performance of PCE on the whole USPS data set, where the number training samples increase from 500 to 9500 with an interval of 500. (a) The recognition rate of PCE with three classifiers. (b) The time costs for different steps of PCE, where \textit{Total Cost} is the cost for building similarity graph and embedding graph. }
\end{figure*}

\section{Conclusion}
\label{sec5}

In this paper, we have proposed a novel unsupervised subspace learning method, called principal coefficients embedding (PCE). Unlike existing subspace learning methods, PCE can automatically determine the optimal dimension of feature space and obtain the low-dimensional representation of a given data set. Experimental results on several popular image databases have shown that our PCE achieves a good performance with respect to additive noise, non-additive noise, and partial disguised images.

The work would be further extended or improved from the following aspects. First, the paper currently only considers one category of image recognition, \emph{i.e.}, image identification. In the future, PCE can be extended to handle the other category of image recognition, \emph{i.e.}, face verification which aims to determine whether a given pair of facial images is from the same subject or not. Second, PCE is a unsupervised method which does not adopt the label information. If such information is available, one can develop the supervised or semi-supervised version of PCE under the framework of graph embedding. Third, PCE can be extended to handle outliers by enforcing $\ell_{2,1}$-norm or Laplacian noises by enforcing $\ell_1$-norm over the errors term in our objective function.

\section*{Acknowledgment}
The authors would like to thank the anonymous reviewers for their valuable comments and suggestions that significantly improve the quality of this paper.

 \bibliography{PCE}
 \bibliographystyle{IEEEtran}

%

%
%
%




\begin{IEEEbiography}[{\includegraphics[width=1in,height=1.25in,clip,keepaspectratio]{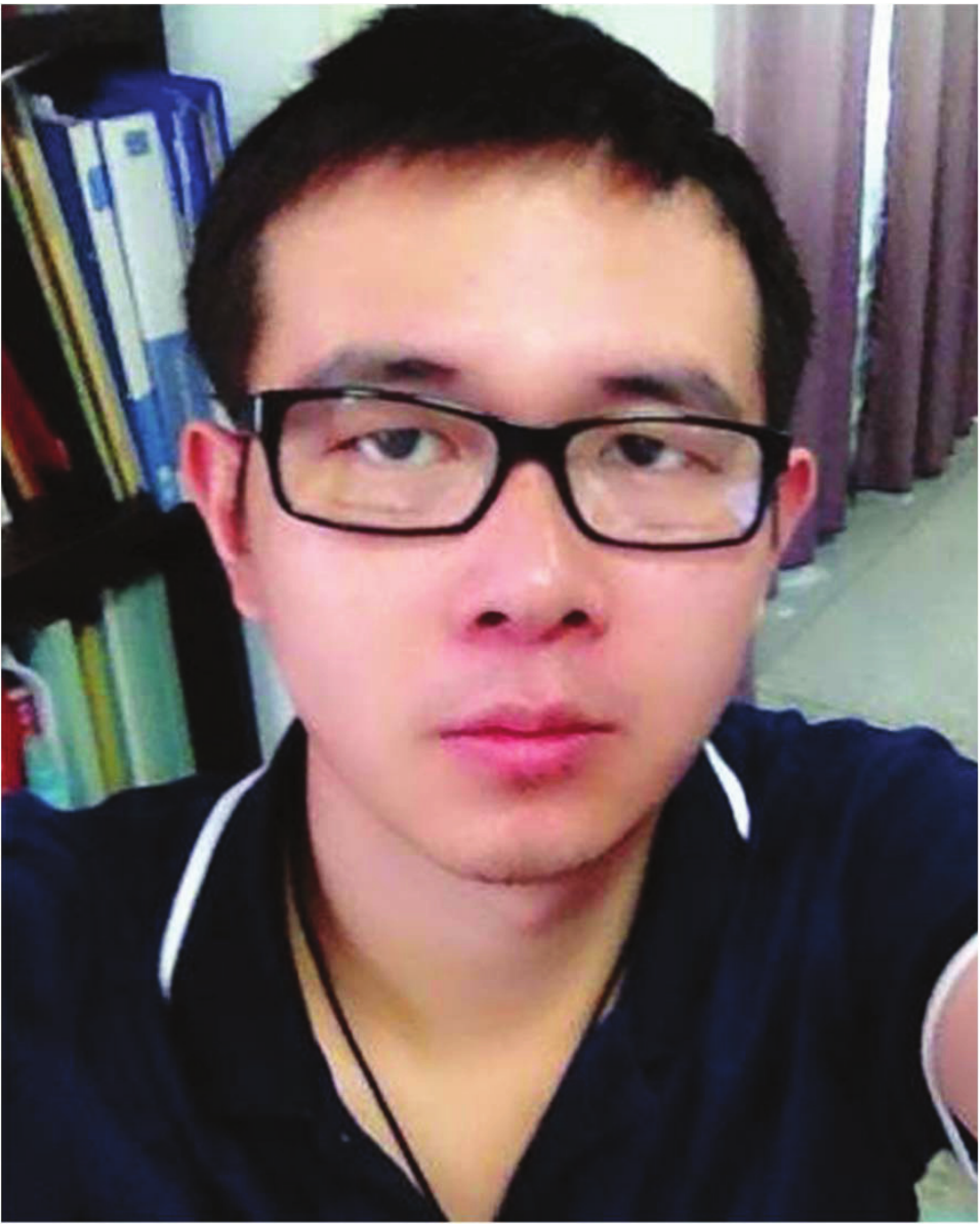}}]{Xi Peng} received the BEng degree in Electronic Engineering and MEng degree in Computer Science from Chongqing University of Posts and Telecommunications, and the Ph.D. degree from Sichuan University, China, respectively. Currently, he is a research scientist at Institute for Infocomm, Research Agency for Science, Technology and Research (A*STAR) Singapore. His current research interests include machine intelligence and computer vision. 

 Dr. Peng is the recipient of Excellent Graduate Student of Sichuan Province, China National Graduate Scholarship, CSC-IBM Scholarship for Outstanding Chinese Students, and Excellent Student Paper of IEEE CHENGDU Section. He has served as a guest editor of \textit{Image and Vision Computing}, a PC member/reviewer for 10 international conferences such as \textit{AAAI Conference on Artificial Intelligence} and a reviewer for over 10 international journals.
\end{IEEEbiography}

\begin{IEEEbiography}[{\includegraphics[width=1in,height=1.25in,clip,keepaspectratio]{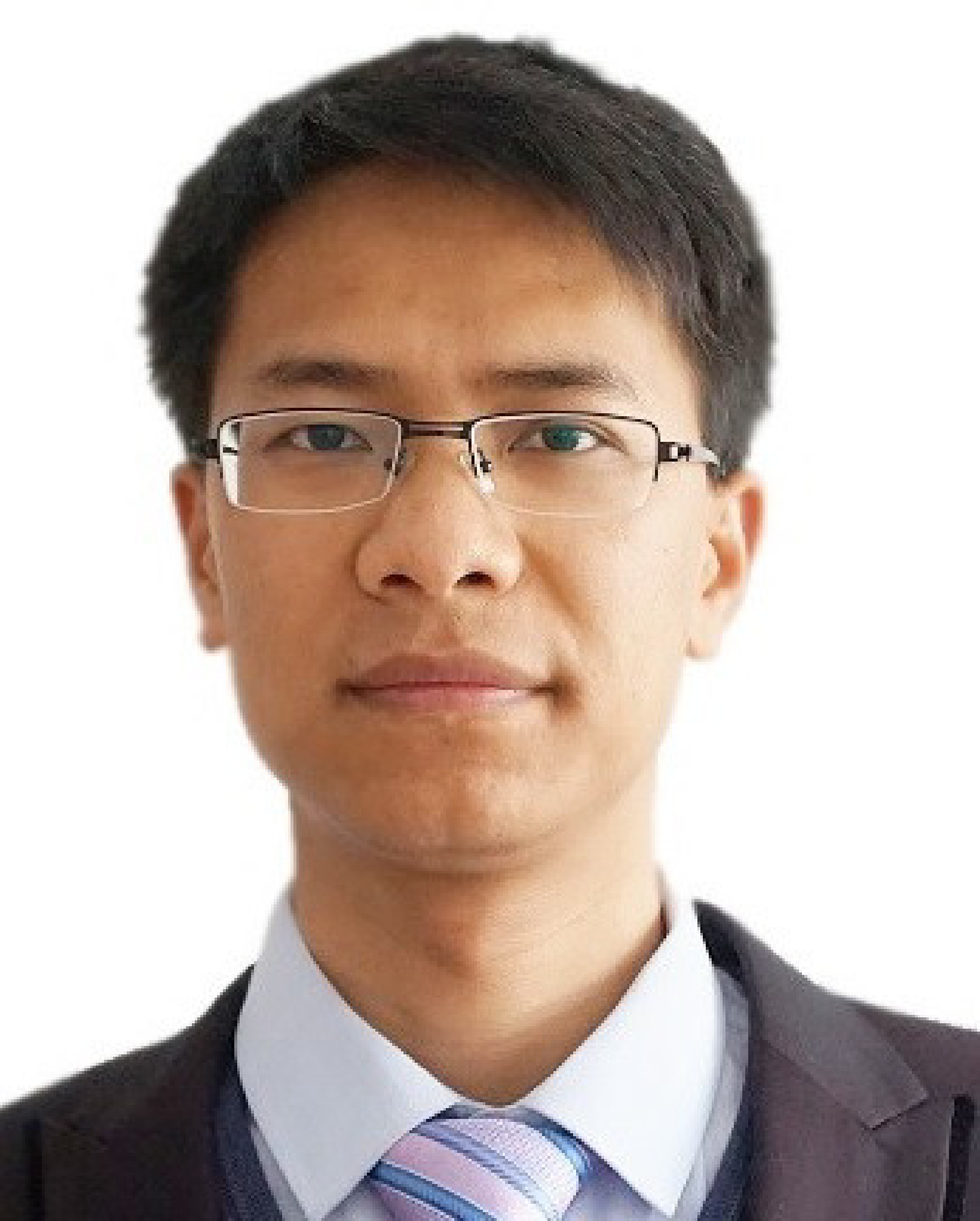}}]{Jiwen Lu} (M'11--SM'15) received the B.Eng. degree in mechanical engineering and the M.Eng. degree in electrical engineering from the Xi'an University of Technology, Xi'an, China, and the Ph.D. degree in electrical engineering from the Nanyang Technological University, Singapore, in 2003, 2006, and 2011, respectively. He is currently an Associate Professor with the Department of Automation, Tsinghua University, Beijing, China. From March 2011 to November 2015, he was a Research Scientist with the Advanced Digital Sciences Center, Singapore. His current research interests include computer vision, pattern recognition, and machine learning. He has authored/co-authored over 130 scientific papers in these areas, where more than 50 papers are published in the IEEE Transactions journals and top-tier computer vision conferences. He serves/has served as an Associate Editor of \textit{Pattern Recognition Letters}, \textit{Neurocomputing}, \textit{Journal of Signal Processing Systems}, t\textit{he IEEE Access} and \textit{the IEEE Biometrics Council Newsletters}, a Guest Editor of \textit{Pattern Recognition}, \textit{Computer Vision and Image Understanding}, \textit{Image and Vision Computing} and \textit{Neurocomputing}, and an elected member of the Information Forensics and Security Technical Committee of the IEEE Signal Processing Society. He is/was an Area Chair for VCIP'16, WACV'16, ICB’16, ICME'15, and ICB'15, a Workshop Co-Chair for ACCV’2016, and a Special Session Co-Chair for VCIP'15. He has given tutorials at several international conferences including CVPR'15, FG'15, ACCV'14, ICME'14, and IJCB'14. He was a recipient of the First-Prize National Scholarship and the National Outstanding Student Award from the Ministry of Education of China in 2002 and 2003, the Best Student Paper Award from Pattern Recognition and Machine Intelligence Association of Singapore in 2012, the Top 10\% Best Paper Award from IEEE International Workshop on Multimedia Signal Processing in 2014, and the National 1000 Young Talents Plan Program in 2015, respectively.
\end{IEEEbiography}

\begin{IEEEbiography}[{\includegraphics[width=1in,height=1.25in,clip,keepaspectratio]{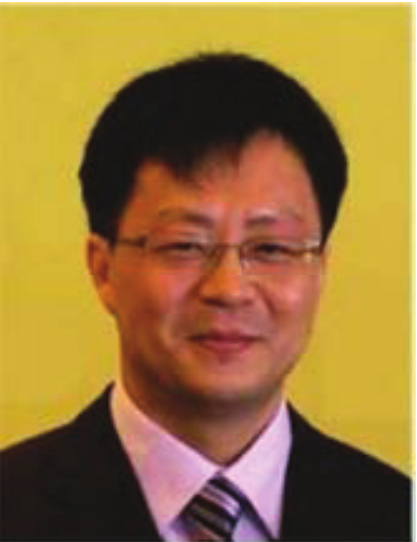}}]{Zhang Yi} (SM'10--F'15)
received the Ph.D. degree in mathematics from the Institute of Mathematics, The Chinese Academy of Science, Beijing, China, in 1994. Currently, he is a Professor at the Machine Intelligence Laboratory, College of Computer Science, Sichuan University, Chengdu, China. He is the co-author of three books: Convergence Analysis of Recurrent Neural Networks (Kluwer Academic Publishers, 2004), Neural Networks: Computational Models and Applications (Springer, 2007), and Subspace Learning of Neural Networks (CRC Press, 2010). He was an Associate Editor of IEEE Transactions on Neural Networks and Learning Systems (2009--2012) and is an Associate Editor of IEEE Transactions on Cybernetics (2014~). His current research interests include Neural Networks and Big Data. He is a fellow of IEEE.
\end{IEEEbiography}

\begin{IEEEbiography}[{\includegraphics[width=1in,height=1.25in,clip,keepaspectratio]{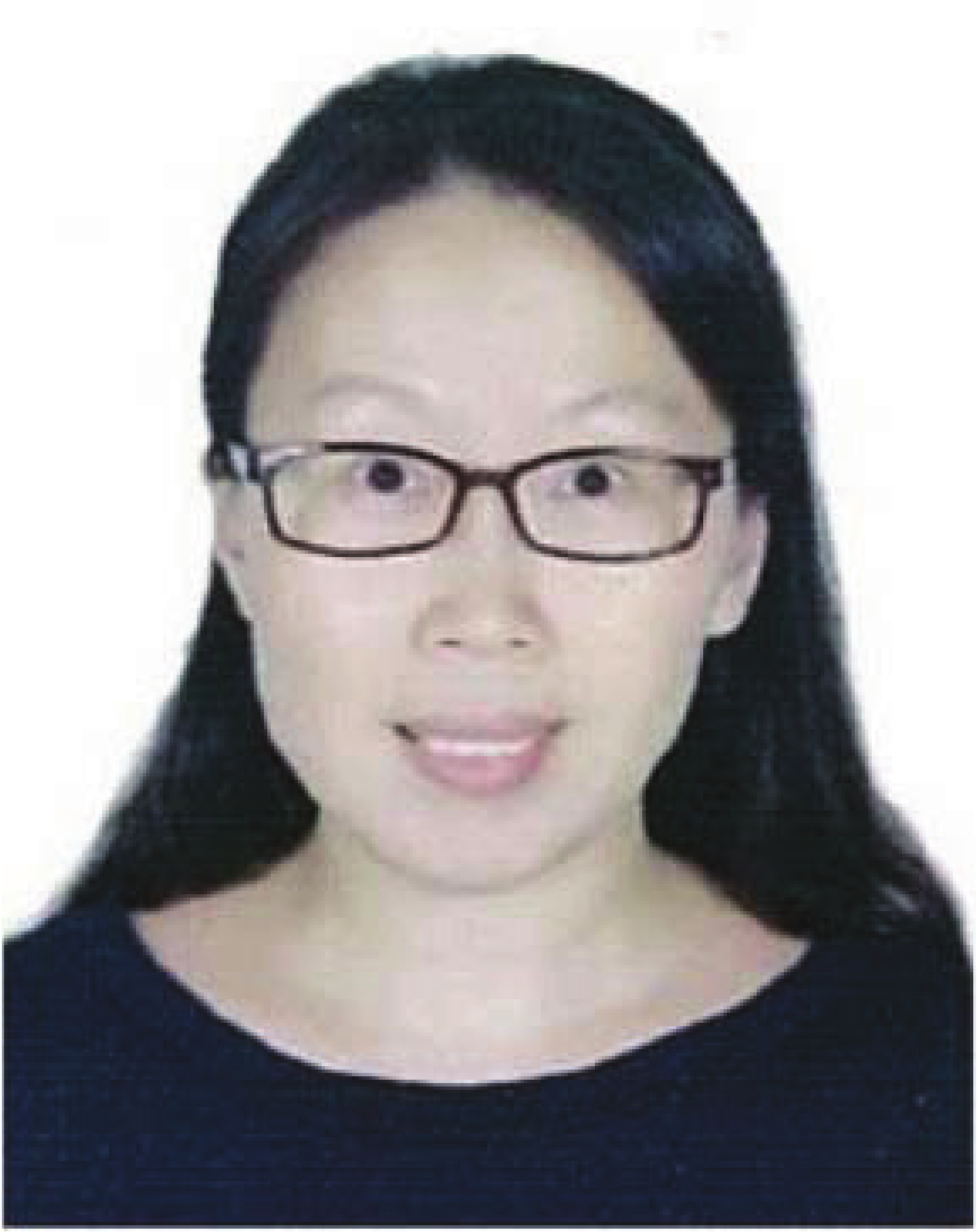}}]{Rui Yan} (M'11) received the bachelor’s and
master’s degrees from the Department of Mathematics, Sichuan University, Chengdu, China, in 1998 and 2001, respectively, and the Ph.D. degree from the Department of Electrical and Computer Engineering, National University of Singapore, Singapore, in 2006. She is a Professor with the College of Computer Science, Sichuan University, Chengdu, China. Her current research interests include intelligent robots, neural computation and nonlinear control.
\end{IEEEbiography}

\end{document}